\numberwithin{equation}{section}
\numberwithin{figure}{section}
\newtheorem{theorem}{Theorem}[section]
\newtheorem{lemma}{Lemma}[section]
\newtheorem{corollary}{Corollary}[section]
\newtheorem{assumption}{Assumption}
\newcommand{\set}[1]{\{#1\}}
\newcommand{\Set}[1]{\left\{#1\right\}}
\newcommand{\Bracks}[1]{\left[#1\right]}
\newcommand{\BigBracks}[1]{\Bigl[#1\Bigr]}
\newcommand{\BiggBracks}[1]{\Biggl[#1\Biggr]}
\newcommand{\bigBracks}[1]{\bigl[#1\bigr]}
\newcommand{\Parens}[1]{\left(#1\right)}
\newcommand{\bigParens}[1]{\bigl(#1\bigr)}
\newcommand{\BigParens}[1]{\Bigl(#1\Bigr)}
\newcommand{\Card}[1]{\left\lvert#1\right\rvert}
\DeclareMathOperator*{\E}{\mathop{\mathbf{E}}}
\newcommand{\Et}{\mathop{\mathbf{E}_t}}
\newcommand{\Etp}{\mathop{\mathbf{E}_{t'}}}
\newcommand{\var}{\mathop{\mathbf{Var}}}
\newcommand{\vart}{\mathop{\mathbf{Var}_t}}
\newcommand{\vartp}{\mathop{\mathbf{Var}_{t'}}}
\renewcommand{\Pr}{\mathbf{P}}
\newcommand{\R}{\mathbb{R}}
\newcommand{\Rhat}{\hat{R}}
\newcommand{\X}{\ensuremath{\mathcal{X}}}
\newcommand{\A}{\ensuremath{\mathcal{A}}}
\newcommand{\order}{\ensuremath{\mathcal{O}}}
\newcommand{\regret}{\text{\itshape regret}}
\newcommand{\algo}{\textsf{\upshape A}}
\def\atil{\tilde{a}}
\def\xul{\underline{x}}
\def\yul{\underline{y}}
\def\aul{\underline{a}}
\def\bul{\underline{b}}
\DeclareMathOperator*{\argmax}{argmax}
\begin{document}

\runningauthor{Agarwal, Dud\'ik, Kale, Langford and Schapire}
\twocolumn[
\aistatstitle{Contextual Bandit Learning with Predictable Rewards}

\aistatsauthor{Alekh Agarwal\\\texttt{\small alekh@cs.berkeley.edu} \And
  Miroslav Dud\'ik\\\texttt{\small mdudik@yahoo-inc.com} \And Satyen
  Kale\\\texttt{\small sckale@us.ibm.com} \AND John
  Langford\\\texttt{\small jl@yahoo-inc.com} \And Robert
  E. Schapire\\\texttt{\small schapire@cs.princeton.edu}} 

\aistatsaddress{}
%% \aistatsaddress{\texttt{alekh@cs.berkeley.edu} \And
%%   \texttt{mdudik@yahoo-inc.com} \And \texttt{sckale@us.ibm.com} \AND
%%   \texttt{jl@yahoo-inc.com} \And \texttt{schapire@cs.princeton.edu}}
]

\begin{abstract}
  Contextual bandit learning is a reinforcement learning problem where
  the learner repeatedly receives a set of features (context), takes
  an action and receives a reward based on the action and context. We
  consider this problem under a realizability assumption: there exists
  a function in a (known) function class, always capable of predicting
  the expected reward, given the action and context.  Under this
  assumption, we show three things. We present a new
  algorithm---Regressor Elimination--- with a regret similar to the
  agnostic setting (i.e. in the absence of realizability
  assumption). We prove a new lower bound showing no algorithm can
  achieve superior performance in the worst case even with the
  realizability assumption. However, we do show that for \emph{any}
  set of policies (mapping contexts to actions), there is a
  distribution over rewards (given context) such that our new
  algorithm has {\em constant} regret unlike the previous approaches.
\end{abstract}

\section{Introduction}

We are interested in the online contextual bandit setting, where on
each round we first see a context $x \in \X$, based on which we
choose an action $a \in \A$,
and then observe a reward $r$.
%associated with the context and action.
This formalizes several natural scenarios. For example, a common task
at major internet engines is to display the best ad from a pool of
options given some context such as information about the user, the
page visited, the search query issued etc. The action set consists of
the candidate ads and the reward is typically binary based on whether
the user clicked the displayed ad or not. Another natural application
is the design of clinical trials in the medical domain. In this case,
the actions are the treatment options being compared, the context is
the patient's medical record and reward is based on whether the
recommended treatment is a success or not.

Our goal in this setting is to compete with a particular
set of \emph{policies}, which are deterministic rules specifying
which action to choose in each context.
We note that this setting includes as special cases the classical
$K$-armed bandit problem~\citep{LaiRo85} and associative reinforcement
learning with linear reward functions~\citep{Auer2003,
  ChuLiReSc11}.

The performance of algorithms in this setting is typically measured by
the \emph{regret}, which is the difference between the cumulative
reward of the best policy and the algorithm.  For the setting with an
arbitrary set of policies, the achieved regret guarantee is
$\order(\sqrt{KT\ln (N/\delta)})$ where $K$ is the number of actions,
$T$ is the number of rounds, $N$ is the number of policies and
$\delta$ is the probability of failing to achieve the
regret~\citep{BeygelzimerLaLiReSc2010, DudikHsKaKaLaReZh2011}. While
this bound has a desirably small dependence on the parameters $T, N$,
the scaling with respect to $K$ is often too big to be meaningful. For
instance, the number of ads under consideration can be huge, and a
rapid scaling with the number of alternatives in a clinical trial is
clearly undesirable. Unfortunately, the dependence on $K$ is
unavoidable as proved by existing lower
bounds~\citep{auer03multiarmed}.

Large literature on ``linear bandits'' manages to avoid this
dependence on $K$ by making additional assumptions.  For example,
\citet{Auer2003} and \citet{ChuLiReSc11} consider the setting where
the context $x$ consists of feature vectors $x_a\in\R^d$ describing
each action, and the expected reward function (given a context $x$ and
action $a$) has the form $w^Tx_a$ for some fixed vector
$w\in\R^d$. \citet{dani08geometric} consider a continuous action space
with $a\in\R^d$, without contexts, with a linear expected reward
$w^Ta$, which is generalized by \citet{FillippiCaGaSz2010} to
$\sigma(w^T a)$ with a known Lipschitz-continuous link function
$\sigma$. A striking aspect of the linear and generalized linear
setting is that while the regret grows rapidly with the dimension $d$,
it grows either only gently with the number of actions $K$
(poly-logarithmic for~\citealp{Auer2003}), or is independent of $K$
\citep{dani08geometric,FillippiCaGaSz2010}. In this paper, we
investigate whether a weaker dependence on the number of actions is
possible in more general settings. Specifically, we omit the linearity
assumption while keeping the ``realizability''---i.e., we still assume
that the expected reward can be perfectly modeled, but do not require
this to be a linear or a generalized linear model.

We consider an arbitrary class $F$ of functions $f:(\X,\A)\to [0, 1]$
that map a context and an action to a real number.  We interpret
$f(x,a)$ as a predicted expected reward of the action $a$ on context
$x$ and refer to functions in $F$ as \emph{regressors}. For example,
in display advertising, the context is a vector of features derived
from the text and metadata of the webpage and information about the
user. The action corresponds to the ad, also described by a set of
features. Additional features might be used to model interaction
between the ad and the context.  A typical regressor for this problem
is a generalized linear model with a logistic link, modeling the
probability of a click.

The set of regressors $F$ induces a natural set of policies $\Pi_F$
containing maps $\pi_f:\X\to\A$ defined as $\pi_f(x) = \argmax_a
f(x,a)$.  We make the assumption that the expected reward for a
context $x$ and action $a$ equals $f^*(x,a)$ for some unknown function
$f^* \in F$. The question we address in this paper is: Does this
realizability assumption allow us to learn faster?

We show that for an arbitrary function class, the answer to the above
question is ``no''. The $\sqrt{K}$ dependence in regret is in general
unavoidable even with the realizability assumption. Thus, the
structure of linearity or controlled non-linearity was quite important
in the past works.

Given this answer, a natural question is whether it is at least
possible to do better in various special cases.  To answer this, we
create a new natural algorithm, Regressor Elimination (RE), which takes
advantage of realizability. Structurally, the algorithm is similar to
Policy Elimination (PE) of \citet{DudikHsKaKaLaReZh2011}, designed
for the agnostic case (i.e, the general case without realizability
assumption). While PE
proceeds by eliminating poorly performing policies, RE proceeds by
eliminating poorly predicting regressors. However, realizability assumption
allows much more aggressive elimination strategy, different from
the strategy used in PE. The analysis of
this elimination strategy is the key technical contribution of this paper.

The general regret guarantee for Regressor Elimination is
$\order(\sqrt{KT\ln (NT/\delta)})$, similar to the
agnostic case.
However, we also show that for \emph{all} sets of policies
$\Pi$ there exists a set of regressors $F$ such that $\Pi = \Pi_F$ and
the regret of Regressor Elimination is $\order(\ln
(N/\delta))$, i.e., independent of the number of rounds and
actions. At the first sight, this seems to contradict our worst-case
lower bound. This apparent paradox is due to the fact that the
same set of policies can be generated by two very different sets of
regressors. Some regressor sets allow better discrimination of the
true reward function, whereas some regressor sets will lead to the
worst-case guarantee.

The remainder of the paper is organized as follows. In the next
section we formalize our setting and
assumptions. Section~\ref{sec:algo} provides our algorithm which is
analyzed in Section~\ref{sec:proof}. In Section~\ref{sec:lb} we
present the worst-case lower bound, and in Section~\ref{sec:nontriv}, we
show an improved dependence on $K$ in
favorable cases. Our algorithm assumes the exact knowledge of
the distribution over contexts (but not over rewards). In Section~\ref{sec:history} we sketch
how this assumption can be removed. Another major assumption
is the finiteness of the set of regressors $F$. This assumption is
more difficult to remove, as we discuss in Section~\ref{sec:conclusion}.

\section{Problem Setup}

We assume that the interaction between the learner and nature happens
over $T$ rounds. At each round $t$, nature picks a context $x_t \in
\X$ and a reward function $r_t: \A
\to [0, 1]$ sampled i.i.d.\ in each round, according to a fixed
distribution
$D(x,r)$. We assume that $D(x)$ is known (this assumption
is removed in Section~\ref{sec:history}), but $D(r\vert x)$ is unknown.
The learner observes $x_t$, picks an action $a_t\in\A$, and observes the reward for
the action $r_t(a_t)$. We are given a function class
$F:\X\times\A\to [0,1]$ with $|F| = N$, where $|F|$ is the
cardinality of $F$. We assume that $F$ contains a perfect predictor
of the expected reward:

\begin{assumption}[Realizability]
 \label{ass:realizable}
  There exists a function $f^* \in F$ such that $\E_{r\vert x}[r(a)]
  = f^*(x, a)$ for all $x\in\X$, $a\in\A$.
\end{assumption}

We recall as before that the regressor class $F$ induces the policy class
$\Pi_F$ containing maps $\pi_f:\X\to\A$ defined by $f\in F$ as
$\pi_f(x) = \argmax_a f(x,a)$. The performance of an
algorithm is measured by its \emph{expected regret} relative to the
best fixed policy:
\begin{equation*}
  \regret_T = \sup_{\pi_f \in \Pi_F}\sum_{t=1}^T \BigBracks{f^*\bigParens{x_t,\pi_f(x_t)} - f^*(x_t,a_t)}
\enspace.
\end{equation*}
By definition of $\pi_f$, this is equivalent to
\begin{equation*}
  \regret_T = \sum_{t=1}^T \BigBracks{f^*\bigParens{x_t,\pi_{f^*}(x_t)} - f^*(x_t,a_t)}
\enspace. 
\end{equation*}

\section{Algorithm}
\label{sec:algo}

Our algorithm, Regressor Elimination, maintains a set of
regressors that
accurately predict the observed rewards.
In each round, it chooses an action that sufficiently explores
among the actions represented in the current set of regressors (Steps~1--2).
After observing the reward (Step~3), the inaccurate regressors are
eliminated (Step~4).

Sufficient exploration is achieved by solving the convex optimization
problem in Step~1. We construct a distribution $P_t$ over current
regressors, and then act by first sampling a regressor $f\sim P_t$ and
then choosing an action according to $\pi_f$. Similarly to the Policy Elimination algorithm
of~\citet{DudikHsKaKaLaReZh2011}, we seek a distribution $P_t$
such that the inverse probability of choosing an action that agrees
with  \emph{any} policy in the current set is in expectation bounded
from above. Informally, this
guarantees that actions of any of the current policies are
chosen with sufficient probabilities. Using this construction we
relate the accuracy of regressors to the regret of the algorithm
(Lemma~\ref{lem:transfer}).

A priori, it is not clear whether the constraint~\eqref{eq:constraint} is
even feasible. We prove feasibility by a similar argument as
in~\citet{DudikHsKaKaLaReZh2011} (see Lemma~\ref{lem:feasibility} in
Appendix~\ref{app:feasibility}). Compared
with~\citet{DudikHsKaKaLaReZh2011} we are able to obtain
tighter constraints by doing a more careful analysis.

Our elimination step (Step~4) is significantly tighter than a similar
step in~\citet{DudikHsKaKaLaReZh2011}: we eliminate regressors
according to a very strict $\order(1/t)$ bound on the suboptimality of
the least squares error. Under the realizability assumption, this
stringent constraint will not discard the optimal regressor
accidentally, as we show in the next section.  This is the key novel
technical contribution of this work.

 Replacing $D(x)$ in the Regressor Elimination algorithm with the
 empirical distribution over observed contexts is straightforward, as
 was done in ~\citet{DudikHsKaKaLaReZh2011}, and is discussed further
 in Section~\ref{sec:history}.

\begin{algorithm}[!h]

\caption{Regressor Elimination}
\label{alg:reg-elem}

\textbf{Input:}\\
a set of reward predictors $F=\set{f:(\X,\A)\to [0,1]}$\\
distribution $D$ over contexts, confidence parameter $\delta$.\\[4pt]
\textbf{Notation:}\\
$\pi_f(x) :=\argmax_{a'}f(x,a')$.\\[2pt]
$\Rhat_t(f) := \frac1t\sum_{t'=1}^t (f(x_{t'},a_{t'})-r_{t'}(a_{t'}))^2$.\\[2pt]
For $F' \subseteq F$, define\\
\hphantom{---}$A(F',x) := \Set{a\in\A:\:\pi_f(x)=a \text{ for some }f \in F'}$\\[2pt]
$\mu := \min\{1/2K, 1/\sqrt{T}\}$.\\[2pt]
For a distribution $P$ on $F' \subseteq F$, define conditional distribution $P'(\cdot | x)$ on $\A$
as:\\
\hphantom{---}
 w.p.\ $(1 - \mu)$, sample $f \sim P$ and return $\pi_f(x)$, and \\
\hphantom{---}
 w.p.\ $\mu$, return a uniform random $a \in A(F', x)$.\\[2pt]
$\delta_t = \delta / 2Nt^3\log_2(t)$, for $t = 1, 2, \ldots, T$.\\[4pt]
\textbf{Algorithm:}\\[2pt]
$F_0\gets F$\\[2pt]
For $t=1,2,\dots,T$:
\begin{enumerate}
\item Find distribution $P_t$ on $F_{t-1}$ such that
\begin{equation} \label{eq:constraint}
  \!\!\!\!
  \!\!\!\!
  \!\!\!\!
 \forall f\in F_{t-1}:\: \E_{x}\!\Bracks{\!
    \frac{1}{P_t'(\pi_f(x) | x)}\!
   }\! \leq \E_{x}\bigBracks{\Card{A(F_{t-1}, x)}} 
\end{equation}
   
\item Observe $x_t$ and sample action $a_t$ from $P_t'(\cdot | x_t)$.
\item Observe $r_t(a_t)$.
\item Set
\[\!\!\!\!
  \!\!\!\!
  F_t \! =\!\Set{f\in F_{t-1}:\: \Rhat_t(f)<\!\!\min_{f'\in F_{t-1}}\!\!\!\!\Rhat_t(f') +
    \frac{18\ln(1/\delta_t)}{t}}\]
\end{enumerate}
\end{algorithm}

\section{Regret Analysis}
\label{sec:proof}
\def\Ybar{\bar{Y}}

Here we prove an upper bound on the regret of Regressor Elimination. 
The proved bound
is no better than the one for existing agnostic algorithms. This
is necessary,
as we will see in Section~\ref{sec:lb}, where we prove a matching lower bound.

\begin{theorem}
  For all sets of regressors $F$ with $|F|=N$ and all distributions
  $D(x,r)$, with probability $1 - \delta $, the regret of
  Regressor Elimination is $\order(\sqrt{KT\ln(NT/\delta)})$.
  \label{thm:regret}
\end{theorem}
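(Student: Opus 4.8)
The proof will have three logical pieces. First, a *transfer lemma* (Lemma~\ref{lem:transfer}, referenced in the text) that relates the instantaneous regret at round $t$ to the prediction errors of the surviving regressors: because $a_t$ is drawn from $P_t'(\cdot\mid x_t)$, which mixes the policies $\pi_f$ for $f\in F_{t-1}$ with a $\mu$-uniform exploration term, and since $\pi_{f^*}(x)$ is always among the actions of the surviving policies (so long as $f^*\in F_{t-1}$), the regret of the played action can be bounded in terms of $\max_{f\in F_{t-1}}\|f-f^*\|$ in a suitable (empirical, importance-weighted) norm. The key point is that the constraint~\eqref{eq:constraint} controls the inverse sampling probabilities, which is exactly what is needed to turn a bound on squared prediction error into a bound on regret after an importance-weighting/Cauchy--Schwarz step.

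**Controlling the elimination.** The second, and I expect the hardest, piece is showing that with high probability $f^*$ is never eliminated, i.e. $f^*\in F_t$ for all $t$, and simultaneously that any $f$ surviving in $F_t$ has small true prediction error. Under realizability, $\Et[(f(x,a)-r(a))^2] - \Et[(f^*(x,a)-r(a))^2] = \Et[(f(x,a)-f^*(x,a))^2]$ exactly (the cross term vanishes because $f^*$ is the conditional mean), so the *excess* least-squares risk of $f$ equals its squared distance to $f^*$. The plan is to apply a Freedman-type / Bernstein martingale concentration inequality to the martingale difference sequence $Z_{t'} = (f(x_{t'},a_{t'})-r_{t'}(a_{t'}))^2 - (f^*(x_{t'},a_{t'})-r_{t'}(a_{t'}))^2$, whose conditional variance is itself bounded by a constant times its conditional mean (a "Bernstein condition" that holds precisely because the summands are bounded and the variance of a bounded nonnegative-mean quantity is controlled by its mean). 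This self-bounding property is what lets the $18\ln(1/\delta_t)/t$ threshold in Step~4 do double duty: it is loose enough that $\Rhat_t(f^*)$ stays within it of the minimum (so $f^*$ survives), yet tight enough that every surviving $f$ has empirical — hence, by another application of concentration, true — excess risk $\widehat R_t(f)-\widehat R_t(f^*) = O(\ln(1/\delta_t)/t)$. Taking a union bound over the $N$ regressors and over $t$ using the $\delta_t = \delta/(2Nt^3\log_2 t)$ schedule (which is summable) gives that all these events hold simultaneously with probability at least $1-\delta$. The delicate part is getting the constants to line up so that the same threshold both protects $f^*$ and forces contraction of the version space; this is the "aggressive elimination" the introduction advertises as the main technical novelty.

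**Assembling the regret bound.** Given the two lemmas, on the good event we have, for every round $t$, that the played action's regret is bounded via the transfer lemma by something like $\sqrt{K \cdot (\text{excess risk of worst surviving }f)}$, and the excess risk is $O(\ln(NT/\delta)/t)$ from the elimination analysis (after absorbing the $\log$ terms in $\delta_t$ into $\ln(NT/\delta)$). Summing over $t=1,\dots,T$ and using $\sum_{t=1}^T 1/\sqrt{t} = O(\sqrt T)$ yields a cumulative regret of $O(\sqrt{KT\ln(NT/\delta)})$. One must separately account for the $\mu$-exploration rounds: the uniform-exploration component contributes at most $\mu T \le \sqrt T$ to the regret (since $\mu \le 1/\sqrt T$ and each round's regret is at most $1$), which is of lower order, and also check that the $\mu$ term does not spoil the inverse-probability bound used in the transfer step — indeed $P_t'(\pi_f(x)\mid x) \ge \mu/K$ always, which is where the $\mu=\min\{1/2K,1/\sqrt T\}$ choice and the $|A(F_{t-1},x)|\le K$ bound enter. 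Finally I would invoke feasibility of~\eqref{eq:constraint} (Lemma~\ref{lem:feasibility}) so that $P_t$ is well-defined at every round, completing the argument.
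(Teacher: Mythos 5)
Your proposal follows essentially the same route as the paper's proof: a transfer lemma converting squared prediction error into per-round regret via the inverse-probability constraint~\eqref{eq:constraint} and Cauchy--Schwarz, a Freedman-style martingale bound on the excess empirical square loss exploiting the self-bounding variance property $\vart[Y_t]\leq 4\Et[Y_t]$ (which is exactly how the $18\ln(1/\delta_t)/t$ threshold simultaneously protects $f^*$ and forces surviving regressors to have $O(\ln(1/\delta_t)/t)$ excess risk), a union bound over the $\delta_t$ schedule, and a final summation $\sum_t 1/\sqrt{t}=O(\sqrt{T})$ plus the $\mu T\leq\sqrt{T}$ exploration overhead. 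The only step you leave implicit is the last conversion from the sum of conditional expected per-round regrets to the realized regret, which the paper handles with a routine Azuma--Hoeffding application to the resulting martingale; otherwise the argument is the same.
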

\begin{proof}
    By Lemma~\ref{lem:instantaneous-regret} (proved below), in round $t$ if we sample an action by sampling $f$ from $P_t$ and choosing $\pi_f(x_t)$, then the expected regret is $O(\sqrt{K\ln(NT/\delta)/t})$ with probability at least $1 - \delta/2t^2$. The excess regret for sampling a uniform random action is at most $\mu \leq \frac{1}{\sqrt{T}}$ per round. Summing up over all the $T$ rounds and taking a union bound, the total expected regret is $\order\bigParens{\sqrt{KT\ln(NT/\delta)}}$ with probability at least $1 - \delta$.
  % The proof follows from summing the result of
  % Lemma~\ref{lem:instantaneous-regret} (proved below) over the $T$
  % rounds and setting $\delta \rightarrow \delta/NT^2$ (for a union
  % bound over all rounds).  This proves an expected regret of
  % $\order\bigParens{\sqrt{KT\ln(NT/\delta)}}$. 
  Further, the net regret
  is a martingale; hence the Azuma-Hoeffding inequality with range
  $[0,1]$ applies.  So with probability at least $1-\delta$ we have a regret of
  $\order\bigParens{\sqrt{KT\ln(NT/\delta)} + \sqrt{T \ln (1/\delta)}}
  = \order\bigParens{\sqrt{KT\ln(NT/\delta)}} $.
\end{proof}

\begin{lemma} 
  \label{lem:instantaneous-regret} 
  With probability at least $1 - \delta_t N t \log_2(t) \geq 1 - \delta/2t^2$, we have:
  \begin{enumerate}
  \item $f^* \in F_t$.
  \item For any $f \in F_t$,
  \[ \E_{x, r}[r(\pi_f(x))  -
    r(\pi_{f^*}(x))]\ \leq\ \sqrt{\frac{200K\ln(1/\delta_t)}{t}}. \] 
  \end{enumerate}
\end{lemma}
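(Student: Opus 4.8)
The plan is to track, for each regressor $f\in F$, the sum $\sum_{t'=1}^t Y_{t'}(f)=t\bigParens{\Rhat_t(f)-\Rhat_t(f^*)}$, where
\[
 Y_{t'}(f):=\bigParens{f(x_{t'},a_{t'})-r_{t'}(a_{t'})}^2-\bigParens{f^*(x_{t'},a_{t'})-r_{t'}(a_{t'})}^2 .
\]
Realizability enters through the pointwise identity $Y_{t'}(f)=\bigParens{f(x_{t'},a_{t'})-f^*(x_{t'},a_{t'})}\bigParens{f(x_{t'},a_{t'})+f^*(x_{t'},a_{t'})-2r_{t'}(a_{t'})}$ together with $\E_{r\mid x}[r(a)]=f^*(x,a)$: the cross term vanishes in conditional expectation, so $\Etp[Y_{t'}(f)]=\rho_{t'}(f)\ge 0$, where
\[
 \rho_{t'}(f):=\E_x\,\E_{a\sim P_{t'}'(\cdot\mid x)}\Bracks{\bigParens{f(x,a)-f^*(x,a)}^2}
\]
is the squared prediction error of $f$ under the round-$t'$ sampling distribution ($\Etp$ conditions on everything before round $t'$, so $F_{t'-1}$ and $P_{t'}$ are fixed). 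Since rewards and predictions lie in $[0,1]$, $\Card{Y_{t'}(f)}\le 1$ and $\vartp[Y_{t'}(f)]\le\Etp\bigBracks{Y_{t'}(f)^2}\le 4\rho_{t'}(f)$, so $\bigParens{Y_{t'}(f)-\rho_{t'}(f)}_{t'\le t}$ is a bounded martingale difference sequence with cumulative conditional variance at most $4\sum_{t'\le t}\rho_{t'}(f)$.

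The core estimate is a Freedman-type lower tail bound on $\sum_{t'\le t}\bigParens{Y_{t'}(f)-\rho_{t'}(f)}$. Its variance proxy $\sum_{t'\le t}\rho_{t'}(f)$ is itself random and ranges over $[0,t]$, so I would peel over $\order(\log_2 t)$ dyadic scales of $\sum_{t'\le t}\rho_{t'}(f)$ before applying Freedman on each — this is exactly what puts the $\log_2 t$ factor into $\delta_t$. Combining Freedman on the relevant scale with $\sqrt{c_1\ln(1/\delta_t)\sum_{t'\le t}\rho_{t'}(f)}\le\tfrac12\sum_{t'\le t}\rho_{t'}(f)+c_2\ln(1/\delta_t)$ (AM--GM) gives, on an event of probability $\ge 1-\delta_t$ per scale,
\[
 \sum_{t'=1}^t Y_{t'}(f)\ \ge\ \tfrac12\sum_{t'=1}^t\rho_{t'}(f)-c\ln(1/\delta_t)
\]
for an absolute constant $c$. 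Part~1 follows by dropping $\tfrac12\sum_{t'\le t}\rho_{t'}(f)\ge 0$: then $\sum_{t'\le t}Y_{t'}(f)\ge-c\ln(1/\delta_t)$, i.e.\ $\Rhat_t(f^*)\le\Rhat_t(f)+c\ln(1/\delta_t)/t$. A union bound over the $N$ regressors, the $\order(\log_2 t)$ peeling scales, and all rounds up to $t$ (needed for part~2), together with the explicit value of $c$ being $<18$, then yields with probability $\ge 1-\delta_t N t\log_2(t)\ge 1-\delta/2t^2$ that $\Rhat_t(f^*)<\min_{f'\in F_{t-1}}\Rhat_t(f')+18\ln(1/\delta_t)/t$ at this and every earlier round, hence $f^*\in F_{t'}$ for all $t'\le t$.

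For part~2, work on the same event and fix $f\in F_t$. Since $F_t\subseteq F_{t-1}$ and $f^*\in F_{t-1}$, survival of $f$ gives $\sum_{t'\le t}Y_{t'}(f)=t\bigParens{\Rhat_t(f)-\Rhat_t(f^*)}<18\ln(1/\delta_t)$; combined with the displayed lower bound, $\tfrac12\sum_{t'\le t}\rho_{t'}(f)\le\sum_{t'\le t}Y_{t'}(f)+c\ln(1/\delta_t)<(18+c)\ln(1/\delta_t)$, so the average error satisfies $\bar\rho_t(f):=\tfrac1t\sum_{t'\le t}\rho_{t'}(f)<2(18+c)\ln(1/\delta_t)/t$. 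To turn small prediction error into small regret (this is the role of Lemma~\ref{lem:transfer}): for every $x$, since $\pi_f(x)=\argmax_a f(x,a)$,
\[
 f^*\bigParens{x,\pi_{f^*}(x)}-f^*\bigParens{x,\pi_f(x)}\ \le\ \Card{f(x,\pi_f(x))-f^*(x,\pi_f(x))}+\Card{f(x,\pi_{f^*}(x))-f^*(x,\pi_{f^*}(x))},
\]
because in the telescoping expansion the middle term $f(x,\pi_{f^*}(x))-f(x,\pi_f(x))$ is $\le 0$ ($\pi_f$ maximizes $f(x,\cdot)$). Fix $t'\le t$; on the good event $\pi_f,\pi_{f^*}\in\Pi_{F_{t'-1}}$ (as $f\in F_t\subseteq F_{t'-1}$ and $f^*\in F_{t'-1}$), so for each $\pi\in\set{\pi_f,\pi_{f^*}}$, Cauchy--Schwarz against the weights $1/P_{t'}'(\pi(x)\mid x)$ together with constraint~\eqref{eq:constraint} (which gives $\E_x\bigBracks{1/P_{t'}'(\pi(x)\mid x)}\le\E_x\Card{A(F_{t'-1},x)}\le K$) yields
\[
 \E_x\Card{f(x,\pi(x))-f^*(x,\pi(x))}\ \le\ \sqrt{K}\cdot\sqrt{\E_x\Bracks{P_{t'}'(\pi(x)\mid x)\bigParens{f(x,\pi(x))-f^*(x,\pi(x))}^2}}\ \le\ \sqrt{K\rho_{t'}(f)},
\]
the last inequality since $P_{t'}'(\pi(x)\mid x)\bigParens{f(x,\pi(x))-f^*(x,\pi(x))}^2\le\sum_a P_{t'}'(a\mid x)\bigParens{f(x,a)-f^*(x,a)}^2$. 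Taking $\E_x$ of the pointwise bound, $\E_x[f^*(x,\pi_{f^*}(x))-f^*(x,\pi_f(x))]\le 2\sqrt{K\rho_{t'}(f)}$ for every $t'\le t$; as the left side is a fixed nonnegative number, squaring and averaging over $t'=1,\dots,t$ gives $\E_x[f^*(x,\pi_{f^*}(x))-f^*(x,\pi_f(x))]^2\le 4K\bar\rho_t(f)<8(18+c)K\ln(1/\delta_t)/t\le 200K\ln(1/\delta_t)/t$ for the explicit constant $c$. Since $\E_x[f^*(x,\pi_{f^*}(x))-f^*(x,\pi_f(x))]=\E_{x,r}[r(\pi_{f^*}(x))-r(\pi_f(x))]$ is exactly the per-round regret of $f$, this is part~2.

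The main obstacle is the martingale concentration step: the variance controlling $\sum_{t'}Y_{t'}(f)$ is the very prediction-error sum $\sum_{t'}\rho_{t'}(f)$ that we are trying to bound, so a naive Bernstein argument is circular. Peeling over dyadic variance scales breaks the circularity at the cost of the $\log_2 t$ factor, after which AM--GM absorbs half of $\sum_{t'}\rho_{t'}(f)$ onto the left-hand side. The remainder is bookkeeping: coordinating the union bound over the $N$ regressors, the $\order(\log_2 t)$ scales, and the $t$ rounds (the last so that $f^*\in F_{t'-1}$ is available whenever the transfer step uses~\eqref{eq:constraint}), so that the failure probability collapses to $\delta_t N t\log_2(t)=\delta/2t^2$, and chasing the constants $1$, $4$, $18$, $200$ through Freedman's inequality, AM--GM, and Cauchy--Schwarz. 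Conceptually, realizability is what makes the stringent $\order(1/t)$ elimination threshold both safe for $f^*$ (since $\Etp[Y_{t'}(f)]\ge 0$) and strong enough to force every surviving $f$ to have $\order(1/t)$ average prediction error, which~\eqref{eq:constraint} converts into $\order(\sqrt{K/t})$ regret.
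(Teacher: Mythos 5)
Your overall route is the same as the paper's: the same random variable $Y_{t'}$, the same facts $\Etp[Y_{t'}]=\rho_{t'}(f)\ge 0$ and $\vartp[Y_{t'}]\le 4\rho_{t'}(f)$ (Lemma~\ref{lem:E-V}), a Freedman-type bound whose dyadic peeling supplies the $\log_2 t$ factor (this is exactly what the cited Lemma~\ref{lem:freedman} already packages), the same two uses of the resulting inequality --- once to certify that $f^*$ survives, once to bound $\sum_{t'}\rho_{t'}(f)$ for survivors --- and a Cauchy--Schwarz transfer against constraint~\eqref{eq:constraint}. The one genuine problem is quantitative, but it matters because the elimination threshold $18\ln(1/\delta_t)/t$ is hard-coded in the algorithm. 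Your single inequality $\sum_{t'}Y_{t'}\ge\tfrac12\sum_{t'}\rho_{t'}-c\ln(1/\delta_t)$, obtained by applying the AM--GM split $8C\sqrt{\sum_{t'}\rho_{t'}}\le\tfrac12\sum_{t'}\rho_{t'}+32C^2$ to Freedman's bound $\sum_{t'}\rho_{t'}-\sum_{t'}Y_{t'}\le 8C\sqrt{\sum_{t'}\rho_{t'}}+2C^2$, forces $c=34$, not $c<18$ as you assert. Dropping the nonnegative term then only gives $\Rhat_t(f^*)\le\Rhat_t(f)+34\ln(1/\delta_t)/t$, which does not show that $f^*$ clears the $18\ln(1/\delta_t)/t$ cut. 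The paper avoids this by keeping the exact quadratic: writing the bound as $(X-4C)^2-Z\le 18C^2$ with $X=\sqrt{\sum_{t'}\rho_{t'}}$ and $Z=\sum_{t'}Y_{t'}$, one drops $(X-4C)^2\ge 0$ to get $-Z\le 18C^2$ for Part~1, and uses $Z\le 18C^2$ to get $X\le 10C$, i.e.\ $\sum_{t'}\rho_{t'}\le 100C^2$, for Part~2. Equivalently, you need two different AM--GM splits, $8CX\le X^2+16C^2$ for Part~1 and $8CX\le\tfrac12X^2+32C^2$ for Part~2; a single $c$ serving both at the advertised value does not exist.

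A second, more minor slip: your transfer step bounds $\E_x[\Delta_x]\le 2\sqrt{K\rho_{t'}(f)}$ via the triangle inequality and two separate Cauchy--Schwarz applications, giving $\bigParens{\E_x[\Delta_x]}^2\le 4K\rho_{t'}(f)$, whereas Lemma~\ref{lem:transfer} gets $2K$ in place of $4K$ by the harmonic-mean inequality $\aul\xul^2+\bul\yul^2\ge\frac{\aul\bul}{\aul+\bul}(\xul+\yul)^2$ followed by a single Cauchy--Schwarz against $1/Q_x$ with $\E_x[1/Q_x]\le 2K$. Chasing your constants through ($4K\times 104$ versus the paper's $2K\times 100$) lands near $416K\ln(1/\delta_t)/t$ rather than the stated $200K\ln(1/\delta_t)/t$; this is harmless for Theorem~\ref{thm:regret} but falls short of the lemma as written. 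With these two fixes your argument reproduces the paper's proof.
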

\begin{proof}
  Fix an arbitrary function $f \in F$. For every round $t$,
  define the random variable
  \[ Y_t = (f(x_t, a_t) - r_t(a_t))^2 - (f^*(x_t, a_t) - r_t(a_t))^2. \]
  Here, $x_t$ is drawn from the unknown data distribution $D$,
  $r_t$ is drawn from the reward distribution conditioned on
  $x_t$, and $a_t$ is drawn from $P_t'$ (which is defined conditioned
  on the choice of $x_t$ and is independent of $r_t$). Note that
  this random variable is well-defined for {\em all} functions $f \in
  F$, not just the ones in $F_t$.

  Let $\Et[\cdot]$ and $\vart[\cdot]$ denote the expectation and
  variance conditioned on all the randomness up to round $t$. Using a form of
  Freedman's inequality from~\cite{BartlettDaHaKaRaTe08} (see Lemma~\ref{lem:freedman}) and noting that $Y_t \leq 1$, we get that with
  probability at least $1 - \delta_t \log_2(t)$, we have
  \begin{align*} 
    &\sum_{t'=1}^t\Etp[Y_{t'}] - \sum_{t'=1}^t
    Y_{t'}\ \\ &\leq 4\sqrt{\sum_{t'=1}^t \vartp[Y_{t'}]\ln(1/\delta_t)}
    + 2\ln(1/\delta_t).  
  \end{align*}

  From Lemma~\ref{lem:E-V}, we see that $\vartp[Y_{t'}] \leq
  4\Etp[Y_{t'}]$ so 
  \begin{align*} 
    &\sum_{t'=1}^t\Etp[Y_{t'}] - \sum_{t'=1}^t
    Y_{t'}\ \\ &\leq 8\sqrt{\sum_{t'=1}^t \Etp[Y_{t'}]\ln(1/\delta_t)}
    + 2\ln(1/\delta_t).  
  \end{align*}
  For notational convenience, define $X =
  \sqrt{\sum_{t'=1}^t\Etp[Y_{t'}]}$, $Z = \sum_{t'=1}^t Y_{t'}$, and $C =
  \sqrt{\ln(1/\delta_t)}$. The above inequality is equivalent to:
    \[X^2 - Z \leq\ 8CX + 2C^2
    \Leftrightarrow (X - 4C)^2 - Z\ \leq\ 18C^2.\]

  This gives $-Z \leq 18C^2$. Since $Z = t(\Rhat_t(f) - \Rhat_t(f^*))$, we get that 
  \[\Rhat_t(f^*)\ \leq\ \Rhat_t(f) + \frac{18C^2}{t}.\]
  By a union bound, with probability at least $1 - \delta_t Nt \log_2(t)$, for all
  $f \in F$ and all rounds $t' \leq t$, we have 
  \[\Rhat_{t'}(f^*)\ \leq\ \Rhat_{t'}(f) + \frac{18\ln(1/\delta_t)}{t'}\]
  and so $f^*$ is not eliminated in any elimination step and remains in $F_t$. 

  Furthermore, suppose $f$ is also not eliminated and survives in
  $F_t$. Then we must have $\Rhat_t(f) - \Rhat_t(f^*) \leq
  18C^2/t$, or in other words, $Z \leq 18C^2$. Thus, $(X - 4C)^2
  \leq 36C^2$, which implies that $X^2 \leq 100C^2$, and hence:
 \begin{equation} \label{eq:regress-regret}
    \sum_{t'=1}^t \Etp[Y_{t'}]\ \leq\ 100\ln(1/\delta_t).
  \end{equation} 
  By Lemma~\ref{lem:transfer} and since $P_t$ is measurable with
  respect to the past sigma field up to time $t-1$, for all $t' \leq
  t$ we have
  \[ \E_{x, r}[r(\pi_f(x))  -
    r(\pi_{f^*}(x))]^2\ \leq\ 2K\Etp_{x_{t'}, r_{t'},
    a_{t'}}[Y_{t'}]. \] 
  Summing up over all $t' \leq t$, and using (\ref{eq:regress-regret})
  along with Jensen's inequality we get that
\begin{equation}
\tag*{\qed}
  \E_{x, r}[r(\pi_f(x))  -
  r(\pi_{f^*}(x))]\ \leq\ \sqrt{\frac{200K\ln(1/\delta_t)}{t}}
  \enspace.
\end{equation}
\renewcommand{\qed}{}
\end{proof}

\begin{lemma} 
  \label{lem:E-V} 
  Fix a function $f \in F$. Suppose we sample $x, r$ from the
  data distribution $D$, and an action $a$ from an arbitrary distribution
  such that $r$ and $a$ are conditionally independent given
  $x$. Define the random variable
 \[ Y = (f(x, a) - r(a))^2 - (f^*(x, a) - r(a))^2. \]
 Then we have
  \[ \E_{x, r, a}[Y] = \E_{x, a}\Bracks{(f(x, a) - f^*(x, a))^2} \]
  \[ \var_{x, r, a}[Y] \leq 4\E_{x, r, a}[Y]. \]
\end{lemma}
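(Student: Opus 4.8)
The plan is to prove the two identities by direct expansion, exploiting the realizability assumption $\E_{r\vert x}[r(a)] = f^*(x,a)$ and the conditional independence of $r$ and $a$ given $x$. For the expectation, I would first condition on a fixed $(x,a)$ and compute $\E_{r\vert x,a}[Y] = \E_{r\vert x}\bigBracks{(f(x,a)-r(a))^2 - (f^*(x,a)-r(a))^2}$. Expanding both squares, the $r(a)^2$ terms cancel, leaving a linear-in-$r(a)$ remainder; taking the expectation replaces $r(a)$ by $f^*(x,a)$, and the algebra collapses to $(f(x,a)-f^*(x,a))^2$. (This is the standard bias-variance style identity: $f^*$ is the Bayes-optimal squared-loss predictor, so the excess loss of $f$ equals the squared $L_2$ distance to $f^*$ under the sampling distribution of $(x,a)$.) Averaging over $x$ and $a$ then gives the first claim.

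For the variance bound, the key observation is to factor $Y$ itself. Writing $u = f(x,a) - r(a)$ and $v = f^*(x,a) - r(a)$, we have $Y = u^2 - v^2 = (u-v)(u+v) = \bigParens{f(x,a)-f^*(x,a)}\bigParens{f(x,a)+f^*(x,a)-2r(a)}$. Since $f,f^*$ map into $[0,1]$ and $r(a)\in[0,1]$, the second factor lies in $[-2,2]$, hence $|Y| \leq 2\,\Card{f(x,a)-f^*(x,a)}$ pointwise, which gives $Y^2 \leq 4\bigParens{f(x,a)-f^*(x,a)}^2$. Taking expectations and using $\var[Y]\leq \E[Y^2]$ together with the first identity $\E[Y]=\E\bigBracks{(f(x,a)-f^*(x,a))^2}$ yields $\var_{x,r,a}[Y] \leq \E_{x,r,a}[Y^2] \leq 4\,\E_{x,a}\bigBracks{(f(x,a)-f^*(x,a))^2} = 4\,\E_{x,r,a}[Y]$, as desired.

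The only subtlety—and the one place where the hypotheses genuinely matter—is that replacing $r(a)$ by $f^*(x,a)$ in the expectation step requires the conditional independence of $a$ and $r$ given $x$: one must take the expectation over $r$ first, for a fixed $a$, so that the realizability identity applies cleanly; the distribution of $a$ then does not interfere. I would state this carefully (condition on $x$, then on $a$, use independence to write $\E_{r\vert x,a}[r(a)] = \E_{r\vert x}[r(a)] = f^*(x,a)$), and the rest is routine. No concentration or probabilistic machinery is needed here; it is a pointwise algebraic bound followed by monotonicity of expectation, so I expect no real obstacle beyond bookkeeping of the conditioning order.
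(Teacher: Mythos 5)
Your proposal is correct and follows essentially the same route as the paper: the same factorization $Y=(f_{xa}-f^*_{xa})(f_{xa}+f^*_{xa}-2r_a)$, the same use of conditional independence to replace $\E_{r\vert x}[r_a]$ by $f^*_{xa}$, and the same pointwise bound $Y^2\leq 4(f_{xa}-f^*_{xa})^2$ combined with $\var[Y]\leq\E[Y^2]$. No gaps.
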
 

\begin{proof}
   Using shorthands $f_{xa}$ for $f(x,a)$ and $r_a$ for $r(a)$, we
   can rearrange the definition of $Y$ as
\begin{equation}
\label{eq:Y}
   Y = (f_{xa} - f^*_{xa})(f_{xa} + f^*_{xa} - 2r_a)
\enspace.
\end{equation}
  Hence, we have
 \begin{align*}
    \E_{x, r, a} [Y]
    & = \E_{x, r, a}\Bracks{(f_{xa} - f^*_{xa})(f_{xa} + f^*_{xa} - 2r_a)}\\
    &=  \E_{x, a} \E_{r|x}\Bracks{(f_{xa} - f^*_{xa})(f_{xa} + f^*_{xa} - 2r_a)}\\
    &=  \E_{x, a} \Bracks{(f_{xa} - f^*_{xa})\BigParens{f_{xa} + f^*_{xa} - 2\E_{r|x}[r_a]}}\\
    &=  \E_{x, a} \Bracks{(f_{xa} - f^*_{xa})^2}
\enspace,
 \end{align*}
 proving the first part of the lemma. From \eqref{eq:Y}, noting that
 $f_{xa}, f^*_{xa}, r_a$ are between $0$ and $1$, we obtain
 \begin{align*}
    Y^2 &\leq (f_{xa} - f^*_{xa})^2(f_{xa} + f^*_{xa} - 2r_a)^2\\
    &\leq 4(f_{xa} - f^*_{xa})^2
\enspace,
  \end{align*}
yielding the second part of the lemma:
\begin{align}
\notag
   \var_{x, r, a} [Y]  \leq \E_{x, r, a} [Y^2]  &\leq 4\E_{x, r, a}\Bracks{(f_{xa} - f^*_{xa})^2}\\  
\tag*{\qed}
    &= 4\E_{x, r, a} [Y]
\enspace.
  \end{align}
\renewcommand{\qed}{}
\end{proof}

Next we show how the random variable $Y$ defined in
Lemma~\ref{lem:E-V} relates to the regret in a single round:
\begin{lemma} 
\label{lem:transfer} 
In the setup of Lemma~\ref{lem:E-V}, assume further that the action
$a$ is sampled from a conditional distribution $p(\cdot | x)$ which
satisfies the following constraint, for $f' = f$ and $f' = f^*$:
\begin{equation}
  \label{eq:dist-constraint}
  \E_x \left[ \frac{1}{p(\pi_{f'}(x) | x)}\right] \leq K.
\end{equation}
 Then we have
  \[ \E_{x, r}\BigBracks{r\bigParens{\pi_{f^*}(x)} - r\bigParens{\pi_f(x)}}^2
  \ \leq\ 2K\E_{x, r, a}[Y]. \]
  %= \frac{1}{2K}\E_{x}[f^*(x, \pi_{f^*}(x)) - f^*(x, \pi_f(x))]^2 . \]
\end{lemma}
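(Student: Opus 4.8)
The plan is to reduce the claim to an inequality purely about the regressors $f$ and $f^*$, exploit the optimality properties of $\pi_f$ and $\pi_{f^*}$, and finish with two applications of Cauchy--Schwarz (one pointwise in two dimensions, one over the context distribution). First, since the action $a$ is conditionally independent of $r$ given $x$ and $\E_{r|x}[r(a)] = f^*(x,a)$, I would rewrite $\E_{x,r}\bigBracks{r(\pi_{f^*}(x)) - r(\pi_f(x))} = \E_x[g(x)]$, where $g(x) := f^*(x,\pi_{f^*}(x)) - f^*(x,\pi_f(x))$, and note $g(x) \ge 0$ because $\pi_{f^*}(x)$ maximizes $f^*(x,\cdot)$. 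By Lemma~\ref{lem:E-V}, $\E_{x,r,a}[Y] = \E_{x,a}\bigBracks{(f(x,a)-f^*(x,a))^2}$, so it suffices to show $\bigParens{\E_x[g(x)]}^2 \le 2K\,\E_{x,a}\bigBracks{(f(x,a)-f^*(x,a))^2}$.

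Next, the key pointwise bound. Fix $x$, set $\Delta(x,a) := |f(x,a)-f^*(x,a)|$, $a^* := \pi_{f^*}(x)$, $\hat a := \pi_f(x)$, and insert $f(x,a^*)$ and $f(x,\hat a)$: $g(x) = \bigBracks{f^*(x,a^*) - f(x,a^*)} + \bigBracks{f(x,a^*) - f(x,\hat a)} + \bigBracks{f(x,\hat a) - f^*(x,\hat a)} \le \Delta(x,a^*) + \Delta(x,\hat a)$, since the middle bracket is nonpositive ($\hat a$ maximizes $f(x,\cdot)$). Since $g(x) = 0$ when $\hat a = a^*$, only contexts in $S := \set{x : \pi_f(x) \ne \pi_{f^*}(x)}$ contribute, and on $S$ the actions $a^*$ and $\hat a$ are distinct.

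Then, writing $p^* := p(a^*\mid x)$ and $p := p(\hat a\mid x)$, I would apply the two-term Cauchy--Schwarz inequality $(u_1v_1+u_2v_2)^2 \le (u_1^2+u_2^2)(v_1^2+v_2^2)$ to get, for $x\in S$, $g(x) \le \tfrac{1}{\sqrt{p^*}}\bigParens{\sqrt{p^*}\,\Delta(x,a^*)} + \tfrac{1}{\sqrt{p}}\bigParens{\sqrt{p}\,\Delta(x,\hat a)} \le \sqrt{\tfrac{1}{p^*}+\tfrac{1}{p}}\;\sqrt{p^*\Delta(x,a^*)^2 + p\,\Delta(x,\hat a)^2}$, with both sides vanishing off $S$; hence $g(x) \le \sqrt{U(x)}\,\sqrt{V(x)}$ for all $x$, where $U(x) := \ind_S(x)\bigParens{1/p^* + 1/p}$ and $V(x) := \ind_S(x)\bigParens{p^*\Delta(x,a^*)^2 + p\,\Delta(x,\hat a)^2}$. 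Averaging over $x$ and applying Cauchy--Schwarz again gives $\E_x[g(x)] \le \sqrt{\E_x[U]}\,\sqrt{\E_x[V]}$. For $\E_x[U]$, dropping $\ind_S \le 1$ and invoking constraint~\eqref{eq:dist-constraint} for $f' = f^*$ and $f' = f$ gives $\E_x[U] \le K + K = 2K$. For $\E_x[V]$, here is where the distinctness of $a^*$ and $\hat a$ on $S$ enters: for $x\in S$, $p^*\Delta(x,a^*)^2 + p\,\Delta(x,\hat a)^2 = \sum_a p(a\mid x)\,\ind\bigBracks{a\in\set{a^*,\hat a}}\,(f(x,a)-f^*(x,a))^2 \le \E_{a\sim p(\cdot\mid x)}\bigBracks{(f(x,a)-f^*(x,a))^2}$, so $\E_x[V] \le \E_{x,a}\bigBracks{(f(x,a)-f^*(x,a))^2}$. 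Combining and squaring yields the desired $\bigParens{\E_x[g(x)]}^2 \le 2K\,\E_{x,a}\bigBracks{(f(x,a)-f^*(x,a))^2}$.

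The only genuinely delicate point is the constant: the cruder bound $\bigParens{\Delta(x,a^*)+\Delta(x,\hat a)}^2 \le 2\Delta(x,a^*)^2 + 2\Delta(x,\hat a)^2$ followed by importance weighting would only deliver $4K$. Getting the stated $2K$ requires keeping the indicator $\ind_S$ attached throughout and using that $\set{a = a^*}$ and $\set{a = \hat a}$ are disjoint events on $S$, so their contributions to $\E_x[V]$ merge into the single expectation $\E_{a\sim p}\bigBracks{(f-f^*)^2}$ rather than twice it.
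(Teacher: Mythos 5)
Your proposal is correct and is essentially the paper's own argument: your pointwise two-term Cauchy--Schwarz bound $g(x)\le\sqrt{1/p^*+1/p}\,\sqrt{p^*\Delta(x,a^*)^2+p\,\Delta(x,\hat a)^2}$ is exactly the paper's inequality $\aul\xul^2+\bul\yul^2\ge\frac{\aul\bul}{\aul+\bul}(\xul+\yul)^2$ applied with $Q_x$ the harmonic-mean quantity, and the subsequent Cauchy--Schwarz over $x$ together with the constraint~\eqref{eq:dist-constraint} is identical. Your explicit restriction to the set $S$ where $\pi_f(x)\ne\pi_{f^*}(x)$ is a minor tidying of a point the paper leaves implicit (both sides vanish when the two actions coincide), not a different route.
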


This lemma is essentially a refined form of theorem 6.1
in~\citet{BeygelzimerLa09} which analyzes the regression approach to
learning in contextual bandit settings.

\begin{proof} 
Throughout, we continue using the shorthand $f_{xa}$
for $f(x,a)$.
Given a context $x$, let $\atil = \pi_f(x)$ and $a^* =
\pi_{f^*}(x)$.
Define the random variable
  \[\Delta_x = \E_{r\vert x}
     \BigBracks{r\bigParens{\pi_{f^*}(x)} - r\bigParens{\pi_f(x)}}
     = f^*_{xa^*} - f^*_{x\atil}
\enspace.
  \]
  Note that $\Delta_x\ge 0$ because $f^*$ prefers $a^*$ over $\atil$ for context~$x$. Also we have
  $f_{x\atil} \geq f_{xa^*}$ since $f$ prefers $\atil$ over $a^*$ for
  context $x$. Thus,
  \begin{equation}
    \label{eq:gap}
    f_{x\atil} - f^*_{x\atil} + f^*_{xa^*} - f_{xa^*}\ \geq\ \Delta_x
\enspace.
  \end{equation}
  As in proof of Lemma~\ref{lem:E-V},
 \begin{align}
\notag
    \E_{r, a\vert x} [Y] &= \E_{a |x }\Bracks{(f_{xa} - f^*_{xa})^2}\\
\notag
     &\geq p(\atil | x)(f_{x\atil}\! -\! f^*_{x\atil})^2 \! +\! p(a^*
     | x)(f^*_{xa^*}\! -\! f_{xa^*})^2\\
\label{eq:cs1}
    &\geq\ \frac{p(\atil|x)p(a^*|x)}{p(\atil|x) + p(a^*|x)}\Delta_x^2
\enspace.
  \end{align}
  The last inequality follows by first applying
the chain
\[\aul \xul^2 + \bul \yul^2 =
   \frac{
      \aul\bul (\xul + \yul)^2
      +
      (\aul\xul - \bul\yul)^2}
    {\aul + \bul}
   \geq
    \frac{\aul\bul}{\aul + \bul}(\xul + \yul)^2
\]
(valid for $\aul,\bul>0$), and then applying
inequality~(\ref{eq:gap}).

For convenience, define
\[
   Q_x = \frac{p(\atil|x)p(a^*|x)}{p(\atil|x)
    + p(a^*|x)},
\enspace
\text{i.e.,}
\enspace
\frac{1}{Q_x} = \frac{1}{p(\atil|x)} +
  \frac{1}{p(a^*|x)}.
\]
Now, since $p$ satisfies the
  constraint~(\ref{eq:dist-constraint}) for $f' = f$ and $f' = f^*$,
  we conclude that
  \begin{equation}
  \label{eq:cs2}
   \E_x\Bracks{\frac{1}{Q_x}}
   =\E_x\Bracks{\frac{1}{p(\atil|x)}}
  + \E_x\Bracks{\frac{1}{p(a^*|x)}}
   \leq 2K
\enspace.
  \end{equation}
  We now have
  \begin{align*}
    \E_x [\Delta_x]^2\ &= \E_x\!\Bracks{\frac{1}{\sqrt{Q_x}}\cdot
    \sqrt{Q_x}\Delta_x}^2\\ 
    &\leq \E_x\!\Bracks{\frac{1}{Q_x}}\,\E_x\!\Bracks{Q_x\Delta_x^2}\\
    &\leq 2K\E_{x,r,a}[Y]
\enspace,
 \end{align*}
  where the first inequality follows from the Cauchy-Schwarz
  inequality and the second from the inequalities \eqref{eq:cs1} and
  \eqref{eq:cs2}.
\end{proof}

% \begin{theorem} For all algorithms $\mbox{Alg}(\Pi)$ working with $\Pi$ containing $\pi^*$
% directly and sampling distributions $D$, there exists $F$ satisfying
% $\Pi_F = \Pi$ such that regret of regressor elimination is $\leq O(K \ln
% \frac{N}{\delta})$ and the regret of $\mbox{Alg}(\pi) \geq
% \Omega(\sqrt{TK})$ \end{theorem}
% \begin{proof}
% The lower bound in theorem~\ref{thm:lb} applies to the second part
% since a lower bound for $F$ is information theoretically stronger than
% a lower bound for policies.

% The remainder of the proof is constructive.  
% \end{proof}

\section{Lower bound}
\label{sec:lb}

Here we prove a lower bound showing that the realizability assumption
is \emph{not} enough in general to eliminate a dependence on the
number of actions $K$.  The structure of this proof is similar to an
earlier lower bound~\citep{auer03multiarmed} differing in two ways: it
applies to regressors of the sort we consider, and we work $N$, the
number of regressors, into the lower bound.  Since for every policy
there exists a regressor with argmax on that regressor realizing the
policy, this lower bound also applies to policy based algorithms. 

\def\unif{\text{unif}}

\begin{theorem}\label{thm:lb}
  For every $N$ and $K$ such that $\ln N/\ln K \leq T$, and every
  algorithm $\algo$, there
  exists a function class $F$ of cardinality at most $N$ and a distribution
  $D(x,r)$ for which the realizability assumption holds,
  but the expected regret of $\algo$ is
  $\Omega(\sqrt{KT\ln N/\ln K})$.
\end{theorem}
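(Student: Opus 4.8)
The plan is the classical ``needle in a haystack'' construction of \citet{auer03multiarmed}, lifted so that the hidden structure lives in $\approx\log_K N$ independent sub-problems indexed by the context, and so that the best arm of each sub-problem has a \emph{predictable} reward, i.e.\ is realized by a regressor.

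\textbf{The hard family.} Let $d=\lfloor\ln N/\ln K\rfloor$, so $K^d\le N$ and (by hypothesis) $d\le T$. Take $\X=\{x_1,\dots,x_d\}$ with $D(x)$ uniform on $\X$. Fix a gap $\eps\in(0,\tfrac14]$ to be chosen later, and for every $\sigma\in\A^{\X}$ define a regressor
\[
  f_\sigma(x_j,a)=\tfrac12+\eps\,\ind[a=\sigma(x_j)],
\qquad
  F=\set{f_\sigma:\sigma\in\A^{\X}}.
\]
Then $|F|=K^d\le N$ and $\pi_{f_\sigma}(x_j)=\sigma(x_j)$, so $\Pi_F$ is exactly the set of all maps $\X\to\A$. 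Nature draws $\sigma^*\in\A^{\X}$ uniformly at random and, given context $x_j$, lets $r(a)$ be an independent Bernoulli variable with mean $\tfrac12+\eps\,\ind[a=\sigma^*(x_j)]$. Then $\E_{r\mid x}[r(a)]=f_{\sigma^*}(x,a)$, so Assumption~\ref{ass:realizable} holds with $f^*=f_{\sigma^*}$, and $\pi_{f^*}$ plays the unique optimal arm in every round.

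\textbf{Reduction to $d$ independent bandits.} Fix an algorithm $\algo$ and let $S_j$ be the (random) set of rounds in which context $x_j$ is presented. Because the coordinates $\sigma^*(x_1),\dots,\sigma^*(x_d)$ are mutually independent and a round with context $x_j$ reveals information only about $\sigma^*(x_j)$, conditioned on the realized context sequence the behaviour of $\algo$ on $S_j$ is that of some $K$-armed stochastic bandit algorithm facing $|S_j|$ pulls of $K$ arms, one of which (uniformly random) has mean $\tfrac12+\eps$ and the rest mean $\tfrac12$. The total regret is exactly $\sum_{j=1}^d$(regret accrued on $S_j$), so it suffices to lower bound each term in expectation over the random $\sigma^*$.

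\textbf{Per-block bound and summation.} For a $K$-armed instance with $n$ pulls, gap $\eps$, and a uniformly random optimal arm, the change-of-measure argument of \citet{auer03multiarmed} (relative entropy plus Pinsker) gives expected regret $\ge\tfrac12\eps n$ provided $\eps\le c\sqrt{K/n}$ for a suitable constant $c$. Taking $\eps=\Theta(\sqrt{Kd/T})$ (truncated at $\tfrac14$; the hypothesis $d\le T$ makes this well defined) and using that $|S_j|\ge T/(2d)$ for all $j$ with high probability by a Chernoff bound, each block contributes expected regret $\Omega(\eps\,|S_j|)=\Omega(\sqrt{KT/d})$, so
\[
  \E[\regret_T]\ \ge\ \sum_{j=1}^d\Omega\bigParens{\sqrt{KT/d}}\ =\ \Omega\bigParens{\sqrt{KTd}}\ =\ \Omega\bigParens{\sqrt{KT\ln N/\ln K}}.
\]
Since this holds for the average over the uniformly random $\sigma^*$, some fixed $\sigma^*$, hence a fixed instance $(F,D)$ with $|F|\le N$ satisfying realizability, achieves it. (In the residual regime where $T$ is so small relative to $Kd$ that the truncation at $\tfrac14$ is active, a cruder argument still gives $\Omega(T)$, which matches the stated rate there.) The crux — and the step requiring care — is the reduction: one must verify rigorously that, after conditioning on the context sequence, the transcript gives $\algo$ no information about $\sigma^*(x_j)$ beyond what the pulls in $S_j$ provide, and then control the random block sizes $|S_j|$ and the constant in $\eps$ so that the per-block $\Omega(\sqrt{K|S_j|})$ bound holds simultaneously for all $j$. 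Granting this, the rest is the textbook $K$-armed lower bound invoked $d$ times and summed.
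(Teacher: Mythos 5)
Your proposal is correct and follows essentially the same route as the paper: the identical construction with $\lfloor\ln N/\ln K\rfloor$ contexts, regressors of the form $\tfrac12+\eps\,\ind[a=\sigma(x)]$, a uniformly random hidden $\sigma^*$, and a per-context invocation of the Auer et al.\ change-of-measure bound, with the reduction justified exactly as the paper does (by conditioning on the context sequence and the independent coordinates $\sigma^*(x')$ for $x'\neq x$). The only cosmetic difference is that you control the block sizes via a Chernoff bound and the truncated form of the per-block bound, whereas the paper keeps the additive form $\Omega(\eps|T_x|-\eps^2|T_x|^{3/2}/\sqrt{K})$ and bounds the binomial moments $\E[|T_x|]$ and $\E[|T_x|^{3/2}]$ directly, which sidesteps the simultaneous-validity issue you flag.
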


\begin{proof}
  Instead of directly selecting $F$ and $D$ for which the expected
  regret of $\algo$ is $\Omega(\sqrt{KT\ln N/\ln K})$, we create a distribution over instances
  $(F,D)$ and show that the expected regret of $\algo$ is
  $\Omega(\sqrt{KT\ln N/\ln K})$ when the
  expectation is taken also over our choice of the instance. This will
  immediately yield a statement of the theorem, since the algorithm
  must suffer at least this amount of regret on one of the instances.
  
  The proof proceeds via a reduction to the construction used in the
  lower bound of Theorem 5.1 of~\citet{auer03multiarmed}.
  We will use $M$ different contexts for a suitable number $M$. To
  define the regressor class $F$, we begin with the policy class $G$
  consisting of all the $K^M$ mappings of the form $g: \X
  \to \A$, where $\X = \{1, 2, \ldots, M\}$ and $\A = \{1, 2,
  \ldots, K\}$. We require $M$ to be the largest integer such
  that $K^M \leq N$, i.e., $M = \left\lfloor \ln N/\ln
    K\right\rfloor$.
  Each mapping $g\in G$ defines a regressor
  $f_g\in F$ as follows:
  \[ f_g(x, a) = \begin{cases}
    1/2 + \epsilon & \text{ if } a = g(x)\\
    1/2 & \text {otherwise.}
  \end{cases}\]
%
%
  % Thus we have $K$ different possible reward vectors for each context,
  % corresponding to the choice of the preferred action, and hence
  % \begin{align*}
  %   K^M \leq N~~\mbox{equivalently} M \leq \frac{\ln N}{\ln K}. 
  % \end{align*}
%
  % Thus each $f \in F$ corresponds to choosing $a_i$ independently for
  % each context $i$ so that no context provides information about the
  % preferred action for another context. For each individual context,
  % the construction used is identical to that of Auer et
  % al.~\cite{auer03multiarmed}. 
%
The rewards are generated by picking a function $f \in F$ uniformly at
random at
the beginning. Equivalently, we choose a mapping $g$ that
independently maps each context $x \in \X$ to a random action $a \in
\A$, and set $f = f_g$. In each round $t$, a context $x_t$ is picked
uniformly from $\X$. For any action $a$, a reward $r_t(a)$ is
generated as a $\{0, 1\}$ Bernoulli trial with probability of $1$
being equal to $f(x, a)$.

Now fix a context $x \in \X$. We condition on all of the randomness
of the algorithm $\algo$, the choices of the
contexts $x_t$ for $t = 1, 2, \ldots, T$, and the values of $g(x')$
for $x' \neq x$. Thus the only randomness left is in the choice of
$g(x)$ and the realization of the rewards in each round. Let $\Pr'$
denote the reward distribution where the rewards of any action $a$ for
context $x$ are chosen to be $\{0, 1\}$ uniformly at random (the
rewards for other contexts $x' \neq x$ are still chosen according to
$f(x', a)$, however), and let $\E'$ denote the expectation under
$\Pr'$.

Let $T_x$ be the rounds $t$ where the context $x_t$ is $x$. Now fix an
action $a \in \A$ and let $S_a$ be a random variable denoting the
number of rounds $t \in T_x$ when $\algo$ chooses $a_t = a$.
Note that conditioned on $g(x) = a$, the random variable $S_a$ counts
the number of rounds in $T_x$ that $\algo$ chooses the optimal
action $a$.

  We use a corollary of Lemma A.1 in~\citet{auer03multiarmed}:
  \begin{corollary}[\citealp{auer03multiarmed}]
    Conditioned on the choices of the contexts $x_t$ for $t = 1, 2,
    \ldots, T$, and the values of $g(x')$ for $x' \neq x$, we have
    \[ \E[S_a | g(x) = a]\ \leq\ {\E}'[S_a] + |T_x|\sqrt{2\epsilon^2 {\E}'[S_a]}. \]
  \end{corollary}
  The proof uses the fact that when $g(x) = a$, rewards chosen using
  $\Pr'$ are identical to those from the true distribution except for
  the rounds when $\algo$ chooses the action $a$.
  
  Thus, if $N_x$ is a random variable that counts the number the rounds
  in $T_x$ that $\algo$ chooses the optimal action for $x$ (without
  conditioning on $g(x)$), we have
  \begin{align*}
    \E[N_x]\ &=\ \E_{g(x)}[\E[S_{g(x)}]]\\
    &\leq\ \E_{g(x)}\BigBracks{\;{\E}'[S_{g(x)}] + |T_x|\sqrt{2\epsilon^2 {\E}'[S_{g(x)}]}\;}\\
    &\leq\ \E_{g(x)}\BigBracks{{\E}'[S_{g(x)}]} +
    |T_x|\sqrt{2\epsilon^2 \E_{g(x)}\bigBracks{{\E}'[S_{g(x)}]}}
\enspace,
  \end{align*}
  by Jensen's inequality. Now note that
  \begin{align*}
    \E_{g(x)}\BigBracks{{\E}'[S_{g(x)}]}\ &=\ \E_{g(x)}\left[{\E}'\left[\sum_{t
          \in T_x} \mathbf{1}\{a_t =
        g(x)\}\right]\right]\\ &=\ \sum_{t \in
      T_x}{\E}'[\E_{g(x)}[\mathbf{1}\{a_t = g(x)\}]]\\ &=\ \sum_{t \in
      T_x}{\E}'\left[\frac{1}{K}\right] \ =\ \frac{|T_x|}{K}
\enspace.
  \end{align*}
  The third equality follows because $g(x)$ is independent of the
  choices of the contexts $x_t$ for $t = 1, 2, \ldots, T$, and $g(x')$
  for $x' \neq x$, and its distribution is uniform on $\A$. Thus 
  \[
   \E[N_x]\ \leq\ \frac{|T_x|}{K} + |T_x|\sqrt{2\epsilon^2
     \frac{|T_x|}{K}}
\enspace.
   \]
  Since in the rounds in $T_x \setminus N_x$, the algorithm $\algo$
  suffers an expected regret of $\epsilon$, the expected regret of
  $\algo$ over all the rounds in $T_x$ is at least
  $\Omega\left(\epsilon |T_x| -
  \frac{\epsilon^2}{\sqrt{K}}|T_x|^{3/2}\right)$. Note that this lower
  bound is independent of the choice of $g(x')$ for $x' \neq x$. Thus,
  we can remove the conditioning on $g(x')$ for $x' \neq x$ and
  conclude that only conditioned on the choices of the contexts $x_t$
  for $t = 1, 2, \ldots, T$, the expected regret of the algorithm over
  all the rounds in $T_x$ is at least $\Omega\left(\epsilon |T_x| -
  \frac{\epsilon^2}{\sqrt{K}}|T_x|^{3/2}\right)$. Summing up over all
  $x$, and removing the conditioning on the choices of the contexts
  $x_t$ for $t = 1, 2, \ldots, T$ by taking an expectation, we get the
  following lower bound on the expected regret of $\algo$:
 \[
  \Omega\Parens{\sum_{x \in \X}\Parens{\epsilon \E[|T_x|] -
      \frac{\epsilon^2}{\sqrt{K}}\E[|T_x|^{3/2}]}}
\enspace.
  \]
  Note that $|T_x|$ is distributed as $\text{Binomial}(T,1/M)$. Thus,
  $\E\bigBracks{|T_x|} = T/M$. Furthermore, by Jensen's inequality
  \begin{align*}
    &\E\bigBracks{|T_x|^{3/2}}\
      \leq\ \sqrt{\E\bigBracks{|T_x|^3}}
\\&\quad{}
    =\ \Parens{\frac{T}{M} + \frac{3T(T-1)}{M^2} +
      \frac{T(T-1)(T-2)}{M^3}}^{\!\!1/2}
\\&\quad{}
   \leq\ \frac{\sqrt{5}T^{3/2}}{M^{3/2}},
  \end{align*}
  as long as $M \leq T$. Plugging these bounds in, the lower bound on
  the expected regret becomes
  \[ \Omega\left(\epsilon T  - \frac{\epsilon^2}{\sqrt{KM}}T^{3/2}\right).\]
  Choosing $\epsilon = \Theta\bigParens{\sqrt{KM/T}}$, we get
  that the expected regret of $\algo$ is lower bounded by
\begin{equation}
\tag*{\qed}
  \Omega(\sqrt{KMT})\ =\ \Omega(\sqrt{K T \ln N /\ln K})
\enspace.
\end{equation}
\renewcommand{\qed}{}
\end{proof}

\section{Analysis of nontriviality}
\label{sec:nontriv}

Since the worst-case regret bound of our new algorithm is the same as
for agnostic algorithms, a skeptic could conclude that there is no
power in the realizability assumption.  Here, we show that in some
cases, realizability assumption can be very powerful in reducing
regret.

\begin{theorem}
  For any algorithm $\algo$ working with a set of policies (rather
  than regressors),
  there exists a set of regressors $F$ and a distribution $D$
  satisfying the realizability assumption such that the regret of
  $\algo$ using the set $\Pi_F$ is $\tilde{\Omega}(\sqrt{TK \ln N})$,
  but the expected regret of
  Regressor Elimination using $F$ is at most $\order\bigParens{\ln(N/\delta)}$.
\end{theorem}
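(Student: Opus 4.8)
The plan is to embed the hard instance behind Theorem~\ref{thm:lb}, but to choose the regressor set so that the least-squares elimination of Step~4 becomes far more aggressive than anything a policy-level argument could justify. Set $M=\lfloor\ln N/\ln K\rfloor$, $\X=\{1,\dots,M\}$, $\A=\{1,\dots,K\}$, and let $G$ be the class of all $K^M$ maps $\X\to\A$. For a map $g^*\in G$ (chosen below) let the \emph{realizing} regressor be $f^*(x,a)=\tfrac12+\epsilon\,\ind[a=g^*(x)]$ with $\epsilon=\Theta(\sqrt{KM/T})$ exactly as in the proof of Theorem~\ref{thm:lb}, and for every other $g\in G$ let the \emph{decoy} regressor be $f_g(x,a)=\ind[a=g(x)]$. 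Put $F=F_{g^*}:=\{f^*\}\cup\{f_g:g\ne g^*\}$; then $\pi_{f^*}=g^*$, $\pi_{f_g}=g$, so $\Pi_F=G$ and $|F|=K^M\le N$. Let $D(x)$ be uniform on $\X$ and $r_t(a)\sim\mathrm{Bernoulli}(f^*(x_t,a))$, so Assumption~\ref{ass:realizable} holds with witness $f^*$.

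The lower bound on $\algo$ then comes essentially for free. Since $\algo$ is given only $\Pi_F=G$ (which does not depend on $g^*$) and observes rewards distributed exactly as in the proof of Theorem~\ref{thm:lb}, and that proof invokes the instance only through its policy class and its reward law, the argument transfers verbatim: averaging over $g^*$ drawn uniformly from $G$, the expected regret of $\algo$ is $\Omega(\sqrt{KMT})=\tilde\Omega(\sqrt{TK\ln N})$, so some $g^*$ forces this regret. Fix that $g^*$ together with the corresponding $F$ and $D$; the bound on Regressor Elimination below holds for \emph{every} $g^*$, hence in particular for this one.

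For the Regressor Elimination bound the point is a \emph{constant} squared-error separation: for every decoy $f_g$ and every $(x,a)$ one checks $(f_g(x,a)-f^*(x,a))^2\ge\min\{\tfrac14,(\tfrac12-\epsilon)^2\}\ge\tfrac1{16}$, using $\epsilon\le\tfrac14$, which holds in the regime $T=\Omega(KM)$ where the lower bound is meaningful. Hence by Lemma~\ref{lem:E-V} the variable $Y_{t'}$ of Lemma~\ref{lem:instantaneous-regret} attached to \emph{any} decoy satisfies $\Etp[Y_{t'}]=\E_{x,a}[(f_g-f^*)^2]\ge\tfrac1{16}$, no matter what the sampling distribution $P_{t'}'$ is. Feeding $\sum_{t'\le t}\Etp[Y_{t'}]\ge t/16$ into the Freedman step inside the proof of Lemma~\ref{lem:instantaneous-regret} gives, for all $t=\Omega(\ln(1/\delta_t))$, that $\Rhat_t(f_g)-\Rhat_t(f^*)=\tfrac1t\sum_{t'\le t}Y_{t'}$ exceeds the elimination threshold $18\ln(1/\delta_t)/t$; so every decoy is eliminated by some round $t_0=O(\ln(1/\delta_{t_0}))=O(\ln(N/\delta))$ (a fixed point, since $\ln(1/\delta_t)=O(\ln(N/\delta))$ for $t\le t_0$), while Lemma~\ref{lem:instantaneous-regret}(1) keeps $f^*\in F_t$ for all $t$. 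Thus $F_t=\{f^*\}$ for $t\ge t_0$, whence $A(F_t,x)=\{g^*(x)\}$, the distribution $P_t'(\cdot\,|\,x)$ collapses to the point mass at $g^*(x)$, and RE incurs zero regret from round $t_0$ on. Since the per-round regret before $t_0$ is at most $\epsilon\le1$, the total regret is at most $t_0=\order(\ln(N/\delta))$ with probability $1-\delta$; the statement about expected regret follows by the usual choice of $\delta$.

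The main obstacle is this last step: one must make precise that the deliberately stringent $\order(1/t)$ rule---which in the worst case only just avoids discarding $f^*$---is brutal here because every competing regressor is \emph{confidently wrong} (predicting $0$ or $1$) on actions whose true mean is $\tfrac12$, so a handful of $\mathrm{Bernoulli}(\tfrac12)$ observations already refute it; those same observations are information-free for $\algo$, which sees only that every non-best action pays an unbiased coin. Quantitatively this amounts to re-running the Freedman/variance bookkeeping of Lemma~\ref{lem:instantaneous-regret}, improved by the fact that $\Etp[Y_{t'}]$ is now bounded \emph{below} by a constant rather than merely being nonnegative, and threading the $\ln(1/\delta_t)$ factors (which are $O(\ln(N/\delta))$ on the short horizon $t\le t_0$) through the union bound over the $N$ regressors. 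Nothing new is needed for the lower bound on $\algo$, which is immediate from Theorem~\ref{thm:lb}.
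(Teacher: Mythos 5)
Your proposal is correct and follows essentially the same route as the paper: take the hard instance from Theorem~\ref{thm:lb}, keep its policy class, and replace the non-optimal regressors by ``confidently wrong'' ones with a pointwise constant squared-error gap from $f^*$, so that the survival condition $\sum_{t'\le t}\Etp[Y_{t'}]\le 100\ln(1/\delta_t)$ from Lemma~\ref{lem:instantaneous-regret} forces elimination of every decoy within $\order(\ln(N/\delta))$ rounds while part~(1) of that lemma retains $f^*$. The only difference is that the paper packages the decoy construction as a general lemma (Lemma~\ref{lem:regressor-advice}, valid for any $D$ and any policy class containing the optimal policy, with gap $1/20$), whereas you hard-wire decoys $f_g=\ind[a=g(x)]$ to the specific lower-bound instance (gap $1/16$); this is immaterial for the theorem.
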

\begin{proof}
  Let $F'$ be the set of functions and $D$ the data distribution that
  achieve the lower bound of Theorem~\ref{thm:lb} for the algorithm
  $\algo$. Using Lemma~\ref{lem:regressor-advice} (see below), there
  exists a set of functions $F$ such that $\Pi_F = \Pi_{F'}$ and the
  expected regret of Regressor Elimination using $F$ is at most
  $\order\bigParens{\ln(N/\delta)}$. This set of functions $F$ and
  distribution $D$ satisfy the requirements of the theorem.
\end{proof}

\begin{lemma} \label{lem:regressor-advice}
  For any distribution $D$ and a set of policies $\Pi$ containing
  the optimal policy, there exists a set of functions $F$ satisfying
  the realizability assumption, such that
 $\Pi = \Pi_F$ and the regret of regressor elimination using $F$ is
  at most $\order\bigParens{\ln(N/\delta)}$.
\end{lemma}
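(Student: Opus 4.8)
The plan is to exhibit a regressor class $F$ that generates the prescribed policy class $\Pi$, but in which every regressor whose induced policy differs from the designated optimal policy $\pi^*$ disagrees with the true reward function $f^*$ by a constant amount on \emph{every} context--action pair. Such a class forces Regressor Elimination to discard all suboptimal regressors within $t_0=O(\ln(N/\delta))$ rounds; after that the surviving set is exactly $\{f^*\}$, the explore branch of $P_t'$ becomes vacuous, and the algorithm plays $\pi^*$ exactly, so no further regret is incurred.

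\emph{Construction.} Let $\pi^*\in\Pi$ be the optimal policy. Take the reward distribution to satisfy $\E_{r\vert x}[r(a)]=f^*(x,a)$ with $f^*(x,a)=\tfrac14\ind[a=\pi^*(x)]$; then $f^*$ maps into $[0,\tfrac14]$, $\pi_{f^*}=\pi^*$, and Assumption~\ref{ass:realizable} holds once $f^*\in F$. For each $\pi\in\Pi\setminus\{\pi^*\}$ define $f_\pi(x,a)=\tfrac34+\tfrac14\ind[a=\pi(x)]$, so $f_\pi$ maps into $[\tfrac34,1]$ and $\pi_{f_\pi}=\pi$. Put $F=\{f^*\}\cup\{f_\pi:\pi\in\Pi\setminus\{\pi^*\}\}$; then $|F|=|\Pi|=N$ and $\Pi_F=\Pi$. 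The key property is that for every $\pi\ne\pi^*$ and every $x,a$ we have $(f_\pi(x,a)-f^*(x,a))^2\ge(\tfrac34-\tfrac14)^2=\tfrac14=:c_0$, a bound independent of the action played. (If the lemma is read with $D(r\vert x)$ fixed rather than chosen, the same construction works whenever $f^*$ is bounded away from $3/4$, which holds for the instances produced by Theorem~\ref{thm:lb}, where $f^*\le\tfrac12+\epsilon$.)

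\emph{Fast elimination and regret.} Invoke the high-probability event of Lemma~\ref{lem:instantaneous-regret}, union-bounded over $t\le T$: with probability at least $1-\delta$, for every round $t$ we have $f^*\in F_t$, and for every surviving $f\in F_t$ inequality~\eqref{eq:regress-regret} holds, which by Lemma~\ref{lem:E-V} reads $\sum_{t'=1}^t\Etp[(f(x_{t'},a_{t'})-f^*(x_{t'},a_{t'}))^2]\le 100\ln(1/\delta_t)$. For a surviving suboptimal regressor $f_\pi$ the left side is at least $c_0 t$, so $c_0 t\le 100\ln(1/\delta_t)=100\ln(2Nt^3\log_2 t/\delta)$; elementary algebra shows this fails for every $t\ge t_0$ with $t_0=O\!\bigParens{\tfrac1{c_0}\ln\tfrac{N}{c_0\delta}}=O(\ln(N/\delta))$. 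Hence $F_{t_0}$ contains no suboptimal regressor, and since $f^*\in F_{t_0}$ and $F_t\subseteq F_{t_0}$ for $t\ge t_0$, we get $F_t=\{f^*\}$ for all $t\ge t_0$. Consequently the instantaneous regret is at most $1$ in each of the first $t_0$ rounds, and for $t>t_0$ the distribution $P_t$ is supported on $F_{t-1}=\{f^*\}$, so $P_t'(\cdot\vert x_t)$ returns $\pi_{f^*}(x_t)=\pi^*(x_t)$ on both branches (the explore branch draws uniformly from $A(F_{t-1},x_t)=\{\pi^*(x_t)\}$), giving zero regret. Thus the total regret is at most $t_0=O(\ln(N/\delta))$ with probability at least $1-\delta$ (and, since regret never exceeds $T$, the expected regret is $O(\ln(N/\delta))+\delta T$, i.e.\ $O(\ln(NT))$ for $\delta=1/T$).

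\emph{Main obstacle.} The crux is the construction: a suboptimal regressor must err by $\Omega(1)$ on \emph{every} action, not merely on its own argmax, because the elimination deadline is governed by the per-round squared error $\Etp[(f-f^*)^2]$ taken under the algorithm's own action distribution $P_t'$. A regressor that matched $f^*$ except on lightly explored actions would only be hit with the $\mu/K$ exploration probability and could survive for $\Omega(\sqrt T)$ or $\Omega(K)$ rounds, which would not give the claimed bound. Making the disagreement uniform over $\A$ removes any dependence on how $P_t'$ allocates its mass, after which the result follows directly from the concentration already established in Lemma~\ref{lem:instantaneous-regret} together with Lemma~\ref{lem:E-V}.
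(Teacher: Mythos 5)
Your elimination argument is exactly the paper's: a uniform constant lower bound $(f(x,a)-f^*(x,a))^2\ge c_0$ for every suboptimal regressor, combined with inequality~\eqref{eq:regress-regret} and Lemma~\ref{lem:E-V}, forces $c_0 t\le 100\ln(1/\delta_t)$ for any survivor, hence elimination within $O(\ln(N/\delta)/c_0)$ rounds and zero regret thereafter (including from the explore branch, since $A(F_{t-1},x)$ collapses to $\{\pi^*(x)\}$). Your ``main obstacle'' paragraph correctly identifies the one idea that matters here. The problem is the construction of $F$ itself. The lemma quantifies over \emph{any} distribution $D$, so you do not get to choose $D(r\vert x)$; your primary construction, which sets $f^*(x,a)=\tfrac14\ind[a=\pi^*(x)]$, proves a different (weaker) statement. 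Your fallback construction $f_\pi(x,a)=\tfrac34+\tfrac14\ind[a=\pi(x)]$ fails for general $D$: if $\E_{r\vert x}[r(a)]=\tfrac34$ for some non-argmax pair, or equals $1$ on $\pi(x)$, the gap $(f_\pi-f^*)^2$ is zero there, and more generally the gap degrades continuously as $f^*$ approaches the range $[\tfrac34,1]$, so no universal constant $c_0$ exists. Restricting attention to the instances of Theorem~\ref{thm:lb} (where $f^*\le\tfrac12+\epsilon$) rescues the application in Section~\ref{sec:nontriv} but not the lemma as stated.

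The paper closes this hole with a two-sided case analysis on the \emph{actual} values of $\E_{r\vert x}[r(a)]$: for the argmax action, set $f(x,\pi(x))=0.51$ if $\E_{r\vert x}[r(\pi(x))]>0.75$ and $f(x,\pi(x))=1$ otherwise; for every other action $a'$, set $f(x,a')=0$ if $\E_{r\vert x}[r(a')]>0.25$ and $f(x,a')=0.5$ otherwise. This keeps $\pi_f=\pi$ (values in $\{0.51,1\}$ on the argmax beat values in $\{0,0.5\}$ elsewhere) while guaranteeing $(f(x,a)-f^*(x,a))^2\ge\tfrac1{20}$ for \emph{every} $(x,a)$, whatever $D$ is. Replacing your fixed-offset construction with this adaptive one makes your proof complete and essentially identical to the paper's.
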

\begin{proof}
  The idea is to build a set of functions $F$ such that $\Pi = \Pi_F$,
  and for the optimal policy $\pi^*$ the corresponding function
  $f^*$ exactly gives the expected rewards for each context $x$ and
  $a$, but for any other policy $\pi$ the corresponding function $f$
  gives a terrible estimate, allowing regressor elimination to
  eliminate them quickly.

  The construction is as follows. For $\pi^*$, we define the function
  $f^*$ as $f^*(x,a)=\E_{x, r}[r(a)]$.  By optimality of $\pi^*$,
  $\pi_{f^*}=\pi^*$.  For every other policy $\pi$ we construct an $f$
  such that $\pi = \pi_f$ but for which $f(x,a)$ is a very bad
  estimate of $\E_{x, r}[r(a)]$ for all actions $a$.  Fix $x$ and
  consider two cases: the first is that $\E_{r|x}[r(\pi(x))] > 0.75$
  and the other is that $\E_{r|x} [r(\pi(x))] \leq 0.75$. In the first
  case, we let $f(x,\pi(x)) = 0.51$.  In the second case we let
  $f(x,\pi(x))=1.0$.  Now consider each other action $a'$ in turn. If
  $\E_{r|x} [r(a')] > 0.25$ then we let $f(x,a') = 0$, and if
  $\E_{r|x} [r(a')] \leq 0.25$ we let $f(x,a')=0.5$. 

  The regressor elimination algorithm eliminates regressor with a
  too-large squared loss regret. Now fix any policy $\pi \neq \pi^*$, and
  the corresponding $f$, define, as in the proof of
  Lemma~\ref{lem:instantaneous-regret}, the random variable
  \[
    Y_t = (f(x_t, a_t) - r_t(a_t))^2 - (f^*(x_t, a_t) - r_t(a_t))^2.
  \]
  Note that \begin{equation} \label{eq:constant-gap}
    \Et[Y_{t}] = \E_{x_{t}, a_{t}}[(f(x_{t},a_{t}) - f^*(x_{t},a_{t}))^2]
\geq \frac{1}{20},
  \end{equation}
since for all $(x,a)$, $(f(x, a) - f^*(x, a))^2 \geq \frac{1}{20}$ by
construction.  This shows that the expected regret is significant.

% Applying the Azuma-Hoeffding inequality to the martingale difference sequence $Y_{t'} - \E_{t'}[Y_{t'}]$ for $t' = 1, 2, \ldots, t$, we get that with probability at least $1 - \delta/N$, we have
% \[ \sum_{t'=1}^t Y_{t'} - \E_{t'}[Y_{t'}]\ \geq\ -3\sqrt{t \ln(N/\delta)},\]
% and 
% \[ \frac{18\ln(1/\delta_t)}{t}\ \geq\ \Rhat_t(f) - \Rhat_t(f^*)\ \geq\ \frac{1}{20} -3\sqrt{\frac{\ln(N/\delta)}{t}}.\]

Now suppose $f$ is not eliminated and remains in $F_t$. Then by equation~\ref{eq:regress-regret} we get:
\[ \frac{t}{20}\ \leq\ \sum_{t'=1}^t \Etp[Y_{t'}]\ \leq\ 100\ln (1/\delta_t).\]
The above bound holds with probability $1-\delta_t Nt\log_2(t)$ uniformly for all $f \in F_t$.
Using the choice of $\delta_t = \delta / 2Nt^3\log_2(t)$, we note that the bound fails to hold when $t > 10^6\ln(N/\delta)$. Thus, within $10^6 \ln(N/\delta)$ rounds all suboptimal
regressors are eliminated, and the algorithm suffers no regret
thereafter. Since the rewards are bounded in $[0, 1]$, the total
regret in the first $10^6 \ln(N/\delta)$ rounds can be at most $10^6\ln(N/\delta)$, giving us the desired bound.
\end{proof}

\section{Removing the dependence on $D$} 
\label{sec:history}

While Algorithm~\ref{alg:reg-elem} is conceptually simple and enjoys
nice theoretical guarantees, it has a serious drawback that it depends
on the distribution $D$ from which the contexts $x_t$'s are drawn in
order to specify the constraint~\eqref{eq:constraint}. A similar issue
was faced in the earlier work of~\citet{DudikHsKaKaLaReZh2011}, where
they replace the expectation under $D$ with a sample average over the
contexts observed. We now discuss a similar modification for
Algorithm~\ref{alg:reg-elem} and give a sketch of the regret analysis.

The key change in Algorithm~\ref{alg:reg-elem} is to replace the
constraint~\eqref{eq:constraint} with the sample version. 
% We cannot do
% so directly since our distribution $P_t$ can put arbitrarily small
% probability at any action for any context, which makes the sample
% estimates of inverse probabilities bad. To get around this, we use the
% standard trick of mixing with the uniform distribution. 
% For a
% distribution $P_t$ on $F_{t-1}$, we define $\Pb_t$ to be a
% distribution over $\A$ that takes $x_t$ and samples $f$ according to
% $P_t$. The action is then taken to be $\pi_f(x_t)$ with probability
% $1-\mu_t\Card{A(F_{t-1},x_t)}$ and uniformly from
% $\Card{A(F_{t-1},x_t)}$ otherwise. Here $\mu_t$ is a parameter of the
% algorithm that can be set to
% $\min\{1/2K,\, 1/\sqrt{T}\}$. 
Let $H_t = \{x_1, x_2, \ldots, x_{t-1}\}$, and denote by $x \sim H_t$ the act of selecting a context $x$ from $H_t$ uniformly at random.
Now we pick a distribution $P_t$ on $F_{t-1}$ such that 
\begin{multline}
\label{eq:constraint-sample}
\forall f\in F_{t-1}:\: \E_{x \sim H_t}\!\Bracks{\!
    \frac{1}{P_t'(\pi_f(x) | x)}\!
   }\! \leq \E_{x \sim H_t}\bigBracks{\Card{A(F_{t-1}, x)}} 
%  \forall f \in F_{t-1} :~~\frac{1}{t-1}\sum_{t'=1}^{t-1}
%   \frac{1}{P'_t[\pi_f(x_{t'})]}
% \\
%   \leq  \frac{1}{t-1}\sum_{t'=1}^{t-1} \Card{A(F_{t-1}, x_{t'})}.
\end{multline}
Since Lemma~\ref{lem:feasibility} applies to any distribution on the contexts, in particular, the uniform distribution on $H_t$,
this constraint is still feasible. To justify this sample based
approximation, we appeal to Theorem 6 of~\citet{DudikHsKaKaLaReZh2011}
which shows that for any $\epsilon \in (0,1)$ and $t \geq 16K\ln
(8KN/\delta)$, with probability at least $1-\delta$
\begin{align*}
  \E_{x\sim D}&\Bracks{\frac{1}{P_t'(\pi_f(x) | x)}}\\  
  &\leq (1+\epsilon)\E_{x\sim H_t}\Bracks{\frac{1}{P_t'(\pi_f(x) | x)}} + \frac{7500}{\epsilon^3}K.
\end{align*}
Using Equation~\eqref{eq:constraint-sample}, since $\Card{A(F_{t-1},
  x_{t'})} \leq K$, we get 
\begin{align*}
  \E_{x\sim  D}&\Bracks{\frac{1}{P_t'(\pi_f(x) | x)}} \leq 7525K,
\end{align*}
using $\epsilon = 0.999$. The remaining analysis of the algorithm
remains the same as before, except we now apply
Lemma~\ref{lem:transfer} with a worse constant in the
condition~\eqref{eq:dist-constraint}.

% Also, since $\mu_t$ is at
% most $1/\sqrt{T}$, we incur a net expected regret only $\sqrt{T}$ from
% rounds where we play randomly. As a result we get a regret bound of
% $\order(\sqrt{KT\ln(N/\delta)} + K\ln(N/\delta))$. 

% It might seem that lower bounding the probabilities by $\mu_t$ makes a
% regret scaling as $\sqrt{TK}$ unavoidable, even in the favorable case
% constructed in Section~\ref{sec:nontriv}. However, we do actually get
% the improved guarantee of Lemma~\ref{lem:regressor-advice} even in
% this setting. The key in that construction is that the algorithm is
% able to eliminate functions, no matter what distribution $P'_t$ is
% used. Since every $f \ne f^*$ gives inaccurate predictions on every
% action, and in fact inaccurate by a constant amount (in particular,
% inequality (\ref{eq:constant-gap}) still holds), we still eliminate
% all suboptimal functions after $\order(\ln(N/\delta))$ rounds,
% just as in the proof of Lemma~\ref{lem:regressor-advice}. Hence we
% incur a regret of at most $\order(\ln(N/\delta))$ as before.

\section{Conclusion}
\label{sec:conclusion}

The included results gives us a basic understanding of the realizable
assumption setting: it can, but does not necessarily, improve our
ability to learn.

We did not address computational complexity in this paper.  There are
some reasons to be hopeful however.  Due to the structure of the
realizability assumption, an eliminated regressor continues to have an
increasingly poor regret over time, implying that it may be possible
to avoid the elimination step and simply restrict the set of
regressors we care about when constructing a distribution.  A basic
question then is: can we make the formation of this distribution
computationally tractable?

Another question for future research is the extension to infinite
function classes. One would expect that this just involves replacing
the log cardinality with something like a metric entropy or Rademacher
complexity of $F$. This is not completely immediate since we are
dealing with martingales, and direct application of covering arguments
seems to yield a suboptimal $\order(1/\sqrt{t})$ rate in
Lemma~\ref{lem:instantaneous-regret}. Extending the variance based
bound coming from Freedman's inequality from a single martingale to a
supremum over function classes would need a Talagrand-style
concentration inequality for martingales which is not available in the
literature to the best of our knowledge. Understanding this issue
better is an interesting topic for future work.

\paragraph{Acknowledgements}

This research was done while AA, SK and RES were visiting Yahoo!.

\bibliographystyle{plainnat} 
\bibliography{bib}

%\newpage
\appendix

\section{Feasibility}
\label{app:feasibility}

\begin{lemma}
  There exists a distribution $P_t$ on $F_{t-1}$ satisfying the
  constraint~\eqref{eq:constraint}.
  \label{lem:feasibility}
\end{lemma}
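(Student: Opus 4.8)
The plan is to phrase feasibility as a minimax statement, exactly in the spirit of~\citet{DudikHsKaKaLaReZh2011}. Consider the zero-sum game in which the learner picks a distribution $P$ over $F_{t-1}$ (a point in the compact simplex $\Delta(F_{t-1})$) and the adversary picks $f\in F_{t-1}$, with the learner's cost being $\phi(P,f) := \E_x\bigBracks{1/P_t'(\pi_f(x)\mid x)}$, where $P_t'$ is built from $P$ as in the algorithm. Since $\pi_f(x)\in A(F_{t-1},x)$, the $\mu$-exploration term forces $P_t'(\pi_f(x)\mid x)\ge \mu/\Card{A(F_{t-1},x)}>0$, so $\phi$ is finite and continuous; moreover $P_t'(a\mid x)$ is affine in $P$ and $y\mapsto 1/y$ is convex, hence $\phi(\cdot,f)$ is convex on $\Delta(F_{t-1})$, while $\phi(P,\cdot)$ extends linearly to the adversary's mixed strategies $Q\in\Delta(F_{t-1})$. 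Sion's minimax theorem then gives
\[
  \min_{P}\max_{f\in F_{t-1}}\phi(P,f)\;=\;\max_{Q}\min_{P}\E_{f\sim Q}\phi(P,f)\;\le\;\max_{Q}\E_{f\sim Q}\phi(Q,f),
\]
and the minimizing $P$ is attained (continuous function on a compact set), so it suffices to show that for every $Q$, plugging in $P=Q$ yields $\E_{f\sim Q}\phi(Q,f)\le\E_x\bigBracks{\Card{A(F_{t-1},x)}}$.

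Next I would fix a context $x$, write $m=\Card{A(F_{t-1},x)}$ and $q_a=\Pr_{f\sim Q}[\pi_f(x)=a]$ for $a\in A(F_{t-1},x)$, so that $\sum_a q_a=1$ and, under $P=Q$, $P_t'(a\mid x)=(1-\mu)q_a+\mu/m$. Then
\[
  \E_{f\sim Q}\Bracks{\frac{1}{P_t'(\pi_f(x)\mid x)}}=\sum_{a\in A(F_{t-1},x)}\frac{q_a}{(1-\mu)q_a+\mu/m}=:\phi_x(q).
\]
The key elementary fact is that $g(s):=s/((1-\mu)s+\mu/m)$ is concave on $[0,\infty)$ (write $g(s)=\tfrac{1}{1-\mu}\bigl(1-\tfrac{\mu/m}{(1-\mu)s+\mu/m}\bigr)$, a constant minus a positive multiple of a convex function), so $q\mapsto\sum_a g(q_a)$ is concave and symmetric on the simplex. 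Averaging over coordinate permutations and applying Jensen shows its maximum is attained at the uniform point $q_a\equiv 1/m$, where $g(1/m)=\tfrac{1/m}{(1-\mu)/m+\mu/m}=1$, hence $\phi_x(q)\le m=\Card{A(F_{t-1},x)}$ for every $x$ and every $q$. Taking $\E_x$ of both sides gives $\E_{f\sim Q}\phi(Q,f)\le\E_x\bigBracks{\Card{A(F_{t-1},x)}}$, completing the chain. (This uses only $1-\mu>0$, which holds since $\mu\le 1/2K$.)

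The routine part is the concavity/symmetrization bound. The only place requiring care is the invocation of the minimax theorem: one must verify that $1/P_t'(\pi_f(x)\mid x)$ is uniformly bounded (guaranteed by the $\mu$-floor), that $\Delta(F_{t-1})$ is compact and convex, that $\phi(\cdot,f)$ is convex, and that $\phi(P,\cdot)$ is linear in the adversary's mixed strategy, so that Sion's theorem applies and the outer minimum is genuinely attained by a feasible $P_t$. Because the adversary's pure-strategy set $F_{t-1}$ is finite, one could instead write the inner problem as a finite linear program and dispense with Sion entirely, but the minimax phrasing mirrors~\citet{DudikHsKaKaLaReZh2011} and is the cleanest route. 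As noted there, replacing $\E_x$ by the empirical average over $H_t$ changes nothing in this argument, since the bound $\phi_x(q)\le\Card{A(F_{t-1},x)}$ is pointwise in $x$.
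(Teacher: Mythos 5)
Your proof is correct and follows essentially the same route as the paper's: cast feasibility as a minimax problem over $\Delta(F_{t-1})$, apply Sion's theorem, plug in $P=Q$, and bound the resulting per-context sum by $\Card{A(F_{t-1},x)}$. The only (cosmetic) difference is in the final elementary step, where you use concavity of $s\mapsto s/((1-\mu)s+\mu/m)$ plus symmetrization, while the paper rewrites the sum in the same way and invokes the fact that $\sum_a 1/Q'(a\mid x)$ is minimized at the uniform distribution---these are equivalent computations.
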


%This lemma is structurally similar to a similar one proved for the
%Policy Elimination algorithm~\citep{DudikHsKaKaLaReZh2011}, except a
%little bit tighter.

\begin{proof}
  Let $\Delta_{t-1}$ refer to the space of all distributions on
  $F_{t-1}$. We observe that $\Delta_{t-1}$ is a convex, compact
  set. For a distribution $Q \in \Delta_{t-1}$, define the conditional distribution $\tilde{Q}(\cdot | x)$ on $\A$ as sample $f \sim Q$, and return $\pi_f(x)$. Note that $Q'(a | x) = (1 - \mu)\tilde{Q}(a | x) + \mu/K_x$, where $K_x := |A(F_{t-1}, x)|$ for notational convenience.

  The feasibility of constraint~\eqref{eq:constraint}
  can be written as
  \[ \min_{P_t \in \Delta_{t-1}}\max_{f \in F_{t-1}} \E_{x}\Bracks{
    \frac{1}{P_t'(\pi_f(x)|x)}
  } \leq \E_{x}\Bracks{\Card{A(F_{t-1},x)}}.    \]

  The LHS is equal to
  \begin{align*}
    \min_{P_t \in \Delta_{t-1}}\max_{Q \in \Delta_{t-1}} \E_{x}\BiggBracks{\,
      \sum_{f \in F_{t-1}}\frac{Q(f)}{P_t'(\pi_f(x)|x)}},
  \end{align*}
  where we recall that $P_t'$ is the distribution induced on $\A$ by
  $P_t$ as before. The function
  \[
    \E_x\BiggBracks{ \,\sum_{f \in
      F_{t-1}}\frac{Q(f)}{P_t'(\pi_f(x)|x)}
     }
   \]
  is linear (and hence
  concave) in $Q$ and convex in $P_t$. Applying Sion's Minimax Theorem
  (stated below as Theorem~\ref{thm:sion}), we see that the LHS is
  equal to
  \begin{align*}
    &\max_{Q \in \Delta_{t-1}} \min_{P_t \in \Delta_{t-1}}
    \E_{x}\BiggBracks{\, \sum_{f \in F_{t-1}}\frac{Q(f)}{P_t'(\pi_f(x)|x)} } \\ 
    &\leq \max_{Q \in \Delta_{t-1}}
    \E_{x}\BiggBracks{\,\sum_{f \in F_{t-1}}\frac{Q(f)}{Q'(\pi_f(x)|x)} } \\ 
    &= \max_{Q \in \Delta_{t-1}}
    \E_{x}\BiggBracks{\, \sum_{a \in A(F_{t-1}, x)} \sum_{f \in F_{t-1}: \pi_f(x) = a}\frac{Q(f)}{Q'(a|x)} } \\
    &= \max_{Q \in \Delta_{t-1}}
    \E_{x}\BiggBracks{\, \sum_{a \in A(F_{t-1}, x)} \frac{\tilde{Q}(a|x)}{Q'(a|x)} }\\  
    &= \max_{Q \in \Delta_{t-1}}
    \E_{x}\BiggBracks{\, \frac{1}{1-\mu}\cdot\sum_{a \in A(F_{t-1}, x)} \BiggBracks{\, 1 - \frac{\mu}{K_xQ'(a | x)}}}\\  
    &\leq \max_{Q \in
      \Delta_{t-1}} \E_{x}\bigBracks{K_x}
\enspace.
  \end{align*}
  The last inequality uses the fact that for any distribution $P$ on $\{1, 2, \ldots, K\}$, $\sum_{i=1}^K [1/P(i)]$ is minimized when all $P(i)$ equal $1/K$.
  Hence the constraint is always feasible. 
\end{proof}

\begin{theorem}[see Theorem 3.4 of \citealp{Sion58}]
\label{thm:sion}
Let $ U$ and $ V$ be compact and convex sets, and
$\phi: U\times V\to\R$ a function which for all $v\in V$ is convex and continuous in $u$ and for all $u\in U$ is concave
and continuous in $v$. Then
\[
    \min_{u\in U} \max_{v\in V} \phi(u,v)
=
    \max_{v\in V} \min_{u\in U} \phi(u,v)
\enspace.
\]
\end{theorem}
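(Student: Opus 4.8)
The plan is to prove the two inequalities separately; one is immediate and the other carries all the content. Write $\beta = \max_{v\in V}\min_{u\in U}\phi(u,v)$ and $\alpha = \min_{u\in U}\max_{v\in V}\phi(u,v)$; by continuity of $\phi$ in each argument together with compactness of $U$ and $V$, all inner and outer extrema are attained, so this notation is justified. The easy direction $\beta \le \alpha$ (weak duality) needs no convexity: for every fixed $u_0,v_0$ we have $\min_u \phi(u,v_0) \le \phi(u_0,v_0) \le \max_v \phi(u_0,v)$; since the far left depends only on $v_0$ and the far right only on $u_0$, taking $\max$ over $v_0$ and then $\min$ over $u_0$ yields $\beta \le \alpha$. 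The entire difficulty lies in the reverse inequality $\alpha \le \beta$.

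First I would reduce $\alpha\le\beta$ to a statement about finitely many points of $V$, using only compactness of $U$ and convexity/continuity of $\phi(\cdot,v)$. Suppose for contradiction $\alpha>\beta$ and fix $c$ with $\beta<c<\alpha$. Since $c<\alpha=\min_u\max_v\phi(u,v)$, for every $u$ there is a $v$ with $\phi(u,v)>c$; the sets $\{u : \phi(u,v)>c\}$ are open (continuity in $u$) and cover the compact set $U$, so finitely many of them, indexed by $v_1,\dots,v_n$, already cover $U$. Equivalently $\max_{1\le i\le n}\phi(u,v_i)>c$ for every $u$, and as $u\mapsto\max_i\phi(u,v_i)$ is continuous on the compact set $U$, its minimum is attained and strictly exceeds $c$, so $\min_u\max_i\phi(u,v_i)>c>\beta$. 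Hence it suffices to prove the \emph{finite case}: for any $v_1,\dots,v_n\in V$ one has $\min_u\max_i\phi(u,v_i)\le\beta$.

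Next I would establish the finite case in the sharper form: for any $v_1,\dots,v_n\in V$ and any $s<\min_u\max_i\phi(u,v_i)$ there exists a single $v\in V$ with $\min_u\phi(u,v)\ge s$ (letting $s$ increase to $\min_u\max_i\phi(u,v_i)$ then gives $\min_u\max_i\phi(u,v_i)\le\beta$). I would prove this by induction on $n$. The case $n=1$ is immediate, taking $v=v_1$. For the inductive step I would combine the hypothesis for $n-1$ points with a \emph{two-point lemma} applied along a segment joining $v_n$ to a suitable point of $\mathrm{conv}\{v_1,\dots,v_{n-1}\}$ (both of which lie in $V$, using convexity of $V$), thereby collapsing the $n$ points to one. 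Concretely, the two-point lemma states: for $v_1,v_2\in V$ and any $s<\min_u\max(\phi(u,v_1),\phi(u,v_2))$ there is $\lambda\in[0,1]$ with $\min_u\phi(u,v_\lambda)\ge s$, where $v_\lambda=\lambda v_1+(1-\lambda)v_2$.

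The main obstacle is this two-point lemma, the only place where convexity in $u$ and concavity in $v$ genuinely interact. To prove it I would argue by contradiction: assuming $\min_u\phi(u,v_\lambda)<s$ for all $\lambda\in[0,1]$, the sublevel sets $A_\lambda=\{u : \phi(u,v_\lambda)\le s\}$ are nonempty, closed, and convex (convexity and continuity in $u$), while $A_0=\{u:\phi(u,v_2)\le s\}$ and $A_1=\{u:\phi(u,v_1)\le s\}$ are disjoint because $\max(\phi(u,v_1),\phi(u,v_2))>s$ for every $u$. Using concavity in $v$, namely $\phi(u,v_\lambda)\ge\lambda\phi(u,v_1)+(1-\lambda)\phi(u,v_2)$, together with continuity of $\lambda\mapsto\min_u\phi(u,v_\lambda)$ on the connected interval $[0,1]$, I would track a minimizing point as $\lambda$ varies and invoke an intermediate-value/connectedness argument to force a single $u$ into both $A_0$ and $A_1$, contradicting their disjointness. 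Carrying out this one-dimensional tracking rigorously—upper semicontinuity of the minimizer correspondence and the exact placement of the crossing value—is the delicate part; everything else is bookkeeping. An alternative would be to replace the induction by a KKM/Helly-type finite-intersection argument on the sublevel sets, but the inductive two-point route keeps the topological input to a minimum.
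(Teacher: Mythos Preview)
The paper does not prove this statement at all: Theorem~\ref{thm:sion} is simply quoted from \citet{Sion58} as a black box and then applied in the proof of Lemma~\ref{lem:feasibility}. So there is nothing in the paper to compare your argument against; you have gone well beyond what the paper does by sketching a proof of the cited result itself.

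For what it is worth, your outline follows the classical route (essentially Sion's original argument, or the streamlined version due to Komiya): weak duality, reduction to finitely many $v_i$ by a compactness/open-cover argument, then induction on $n$ collapsing to a two-point lemma proved via a connectedness argument on the segment $[v_1,v_2]$. The weak-duality and finite-reduction steps are clean. The inductive step as you describe it is a bit under-specified---``combine the hypothesis for $n-1$ points with a two-point lemma applied along a segment joining $v_n$ to a suitable point of $\mathrm{conv}\{v_1,\dots,v_{n-1}\}$'' hides exactly how the inductive hypothesis produces that suitable point and why the strict inequality $s<\min_u\max_i\phi(u,v_i)$ is preserved along the way. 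And as you yourself flag, the two-point lemma is where the real work lives: turning ``track a minimizer as $\lambda$ varies and invoke connectedness'' into a rigorous argument requires care, since minimizers need not vary continuously. The standard fix is to argue directly with the convex sublevel sets $A_\lambda$ and show that the set of $\lambda$ for which $A_\lambda$ meets $A_0$ (respectively $A_1$) is both open and closed in $[0,1]$, forcing a $\lambda$ at which $A_\lambda$ meets both, contradicting $A_0\cap A_1=\emptyset$. None of this is demanded by the paper, which treats the theorem as known.
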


\section{Freedman-style Inequality}
 
\begin{lemma}[see \citealp{BartlettDaHaKaRaTe08}] \label{lem:freedman}
  Suppose $X_1, X_2, \ldots, X_T$ is a martingale difference sequence with $|X_t| \leq b$ for all $t$. Let $V = \sum_{t=1}^T \var_t[X_t]$ be the sum of conditional variances. Then for any $\delta < 1/e^2$, with probability at least $1 - \log_2(T)\delta$ we have
  \[ \sum_{t=1}^T X_t\ \leq\ 4\sqrt{V \ln(1/\delta)} + 2b\ln(1/\delta).\]
\end{lemma}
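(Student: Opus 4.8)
The plan is to reduce the statement to the textbook Freedman inequality for martingale differences, whose only shortfall relative to what is asked here is that it assumes a \emph{deterministic} bound on the sum of conditional variances, whereas $V=\sum_{t=1}^T\var_t[X_t]$ is random. The gap is closed by a peeling (stratification) argument over a geometric grid of candidate values for $V$, and it is exactly this grid that injects the $\log_2(T)$ factor into the failure probability.

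\textbf{Step 1 (deterministic-variance version).} I would invoke the classical bound: if $(X_t)$ is a martingale difference sequence with $|X_t|\le b$ and $\sum_{t=1}^T\var_t[X_t]\le\sigma^2$ \emph{almost surely}, then for every $\delta\in(0,1)$, with probability at least $1-\delta$,
\[ \sum_{t=1}^T X_t\ \le\ \sqrt{2\sigma^2\ln(1/\delta)}+\tfrac{2}{3}b\ln(1/\delta). \]
This is immediate from the fact that $\exp\!\bigl(\lambda\sum_{s\le t}X_s-\tfrac{\lambda^2}{2(1-\lambda b/3)}\sum_{s\le t}\var_s[X_s]\bigr)$ is a supermartingale for $\lambda\in(0,3/b)$, Markov's inequality, and optimization over $\lambda$; I would cite this rather than reprove it.

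\textbf{Steps 2--3 (removing randomness of $V$ by peeling).} For $v>0$ let $\tau(v)$ be the first time $t$ with $\sum_{s\le t}\var_s[X_s]>v$, and $\tau(v)=T{+}1$ otherwise; put $X_t^{(v)}=X_t\,\ind[t<\tau(v)]$. Since $\{\tau(v)>t\}$ is measurable with respect to the past, $(X_t^{(v)})$ is again a martingale difference sequence with $|X_t^{(v)}|\le b$, and its summed conditional variance equals $\sum_{t<\tau(v)}\var_t[X_t]\le v$ \emph{deterministically} (no overshoot, by minimality of $\tau(v)$). Step 1 with $\sigma^2=v$ then gives, with probability at least $1-\delta$, that $\sum_{t=1}^T X_t^{(v)}\le\sqrt{2v\ln(1/\delta)}+\tfrac{2}{3}b\ln(1/\delta)$. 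Now apply this at the grid $v_j=b^2 2^{\,j}$ for $j=0,1,\dots,\lceil\log_2 T\rceil$, so that $v_{\lceil\log_2 T\rceil}\ge Tb^2\ge V$, and union-bound; the total failure probability is at most $\log_2(T)\delta$ after absorbing lower-order terms, using $\delta<1/e^2$ (hence $\ln(1/\delta)>2$) to make the constants fit. On the complementary event, choose $j$ with $v_{j-1}<V\le v_j$ (or $j=0$ when $V\le b^2$); then $\tau(v_j)=T{+}1$, so $X_t^{(v_j)}=X_t$ for all $t\le T$, and the level-$v_j$ bound reads
\[ \sum_{t=1}^T X_t\ \le\ \sqrt{2v_j\ln(1/\delta)}+\tfrac{2}{3}b\ln(1/\delta)\ \le\ \sqrt{4V\ln(1/\delta)}+\tfrac{2}{3}b\ln(1/\delta)\ \le\ 4\sqrt{V\ln(1/\delta)}+2b\ln(1/\delta), \]
and the corner case $V\le b^2$ is handled identically since then $\sqrt{2v_0\ln(1/\delta)}\le 2b\ln(1/\delta)$ by $\ln(1/\delta)>2$.

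\textbf{Main obstacle.} The only genuine point is the passage from a random variance to a deterministic one in Steps 2--3: one must verify that the stopped sequence $(X_t^{(v)})$ retains the martingale-difference and boundedness properties (so that Step 1 applies), confirm that the definition of $\tau(v)$ incurs no overshoot in the cumulative conditional variance, and check that a geometric grid of only $O(\log T)$ levels suffices while rounding $V$ up to the next grid point costs only a constant factor. Everything else is bookkeeping with constants, for which the hypothesis $\delta<1/e^2$ provides exactly the slack needed to land on the stated constants $4$, $2$, and $\log_2(T)$.
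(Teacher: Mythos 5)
The paper does not prove this lemma at all: it is imported verbatim from \citet{BartlettDaHaKaRaTe08} (their Lemma~2), so there is no in-paper argument to compare against. Your proof is the standard one for this statement and is, in outline, the same as the one in the cited source: reduce to Freedman's inequality at a fixed variance level, then peel over a geometric grid of levels to handle the randomness of $V$, paying a $\log_2(T)$ factor in the failure probability. Your stopping-time device for manufacturing a deterministic variance bound is sound --- $\{t<\tau(v)\}$ is measurable with respect to $\mathcal{F}_{t-1}$ because the conditional variances $\var_s[X_s]$ are $\mathcal{F}_{s-1}$-measurable and the partial sums are nondecreasing, so $(X_t\,\ind[t<\tau(v)])$ is again a bounded martingale difference sequence with summed conditional variance at most $v$ --- though one can avoid it entirely by using Freedman's inequality in its original joint-event form, $\Pr\bigParens{\sum_t X_t\ge u \text{ and } V\le v}\le\exp\bigParens{-u^2/(2v+2bu/3)}$, which is how the cited proof proceeds. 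The final case analysis ($v_{j-1}<V\le v_j$ giving $v_j\le 2V$, and the corner case $V\le b^2$ absorbed into the additive term via $\ln(1/\delta)>2$) checks out with the stated constants $4$ and $2$.

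The one genuine gap is the accounting of the union bound. Your grid $v_j=b^2 2^j$, $j=0,\dots,\lceil\log_2 T\rceil$, has $\lceil\log_2 T\rceil+1$ levels, so the failure probability is $(\lceil\log_2 T\rceil+1)\delta$, which strictly exceeds the claimed $\log_2(T)\delta$ for every $T$; the assertion that the excess can be ``absorbed using $\delta<1/e^2$'' does not work, since that hypothesis shrinks the deviation terms but cannot reduce the number of events in a union bound. The standard fix (used in the cited source) is to peel over $\sigma=\sqrt{V}$ rather than over $V$, with grid $\alpha_i=2^i b\sqrt{\ln(1/\delta)}$; one then needs only about $\tfrac12\log_2\bigParens{T/\ln(1/\delta)}+1\le\log_2 T$ levels to cover the range up to $b\sqrt{T}$, and the stratum $\sigma\le b\sqrt{\ln(1/\delta)}$ is absorbed into the $2b\ln(1/\delta)$ term exactly as in your corner case. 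With that adjustment your argument is complete.
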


\end{document}